\documentclass[11pt,letterpaper]{article}

\usepackage[margin=1in]{geometry}
\usepackage[T1]{fontenc}

\usepackage[english]{babel}
\usepackage{microtype}
\usepackage{booktabs}
\usepackage{tabularx}
\usepackage{xfrac}
\usepackage{amssymb}
\usepackage{mathtools}
\usepackage{amsthm}
\usepackage{float}
\usepackage{algorithm}
\usepackage{algpseudocode}

\usepackage[round,authoryear]{natbib}
\usepackage{hyperref}

\hypersetup{
  pdftitle={Convergence of Practical Muon with Finite Newton-Schulz Iterations and Nesterov Momentum},
  pdfkeywords={Muon, Newton--Schulz iteration, Nesterov momentum, nonconvex optimization}
}
\usepackage[capitalize,noabbrev]{cleveref}

\theoremstyle{plain}
\newtheorem{theorem}{Theorem}[section]
\newtheorem{proposition}[theorem]{Proposition}
\newtheorem{lemma}[theorem]{Lemma}
\newtheorem{corollary}[theorem]{Corollary}
\theoremstyle{definition}

\newtheorem{assumption}[theorem]{Assumption}
\theoremstyle{remark}
\newtheorem{remark}[theorem]{Remark}

\begin{document}

\title{Convergence of Practical Muon with Finite Newton--Schulz Iterations and Nesterov Momentum}
\author{Hanyang Peng, Hui Wang, Yue Yu\\Pengcheng Laboratory, Shenzhen, China\\\texttt{\{penghy, wangh06, yuy\}@pcl.ac.cn}}
\date{}
\date{}
\maketitle

\begin{abstract}
Practical Muon maintains momentum and performs a small, fixed number of Newton--Schulz iterations separately for each parameter matrix, often with a Nesterov correction. We analyze these layer-wise finite-step updates jointly on a coupled nonconvex objective, rather than replacing them by exact polar factors or one global orthogonalization. Under gradient-dependent $(\mathcal L_0,\mathcal L_1,q)$-smoothness and conditionally unbiased stochastic gradients with bounded layer-wise variance, we establish an $\mathcal O(T^{-1/4})$ bound on the expected average Frobenius gradient norm. The analysis retains the Nesterov recursion and requires neither bounded stochastic gradients, symmetric noise, nor a uniform positive lower bound on the nonzero output singular values. Its constants contain no explicit matrix-dimension or rank factors when the number of blocks and problem constants are fixed. The proof follows a descent inequality and a decomposition of the momentum tracking error into initialization, noise, and drift. For the original five-step quintic, we verify the required scalar-map bounds analytically; the result also allows step-dependent coefficients satisfying the same bounds. A complementary nuclear-norm result quantifies rank dependence under a stronger spectral condition. The vanishing rate uses coupled learning-rate and momentum schedules, including the standard single-coefficient Nesterov rule.
\end{abstract}

\section{Introduction}
\label{sec:introduction}
Muon orthogonalizes a momentum-based update separately for each matrix-valued
parameter. The reference implementation maintains a momentum buffer for each
parameter matrix, applies Frobenius normalization and a small number of quintic
Newton--Schulz iterations to that matrix, and includes an optional Nesterov
correction \citep{Jordan2024PracticalMuon}. These three features---layer-wise
updates, finite orthogonalization, and Nesterov momentum---all matter for
convergence analysis. A single global normalization is not the layer-wise
operation, a finite polynomial need not return an exact polar factor, and the
Nesterov correction changes the update's noise and gradient-tracking error.

Our analysis preserves this layer-wise structure. Each matrix block has its
own momentum, normalization, and orthogonalization, and the blocks may have
different dimensions and noise levels. Their updates nevertheless act on one
coupled objective: all stochastic gradient blocks are evaluated at the same
iterate, and changing one block may change the gradients in other blocks.
We therefore use joint smoothness to control simultaneous updates, while
retaining layer-specific variance bounds and tracking errors. The analysis
does not assume a separable loss or independent noise across layers.
Here a ``layer'' indexes a parameter matrix; several such matrices may belong
to the same architectural layer.

Existing analyses address different aspects of Muon.
\citet{Pethick2025NormConstrainedLMO} place non-Nesterov Muon in an exact-LMO
framework and develop layer-wise norm choices.
\citet{Riabinin2025Gluon} explicitly analyze layer-wise LMOs under block
generalized smoothness. \citet{Shen2025MuonConvergence} study exact-polar
Muon and its matrix geometry, whereas \citet{KimOh2026MuonNewtonSchulz}
analyze finite Newton--Schulz iterations through polar-approximation bounds.
\citet{Choudhury2026NesterovMuon} already combine Nesterov momentum with
inexact polar decomposition and allow gradient-dependent heavy-tailed noise.
\citet{LiTsuchiya2026FiniteMuon} exploit finite-iteration smoothing through
online-to-nonconvex conversion. Table~\ref{tab:practical_muon_comparison}
compares the algorithms, assumptions, and guarantees in these results.
Our contribution is not any one of these ingredients in isolation, but a
direct analysis of their combination: layer-wise Frobenius-normalized finite
polynomial updates with Nesterov momentum, under joint gradient-dependent
smoothness and bounded layer-wise conditional variance, yielding an average
Frobenius stationarity bound without explicit matrix-size or rank factors.
The assumptions are not uniformly weaker than those of all prior work;
in particular, our bounded-variance model does not cover the heavier-tailed
regimes of \citet{Choudhury2026NesterovMuon}.

\begin{table}[H]
\centering
\caption{Comparison of representative convergence analyses relevant to practical Muon.
NS denotes Newton--Schulz iteration; LMO denotes a linear minimization oracle.
Entries describe the cited theoretical results, not every implementation studied in each paper.}
\label{tab:practical_muon_comparison}
\begingroup
\footnotesize
\setlength{\tabcolsep}{2pt}
\renewcommand{\arraystretch}{1.16}
\begin{tabularx}{\textwidth}{@{}
>{\raggedright\arraybackslash}p{0.137\textwidth}
>{\raggedright\arraybackslash}p{0.110\textwidth}
>{\raggedright\arraybackslash}p{0.174\textwidth}
>{\raggedright\arraybackslash}p{0.085\textwidth}
>{\raggedright\arraybackslash}p{0.224\textwidth}
>{\raggedright\arraybackslash}X@{}}
\toprule
\textbf{Work} & \textbf{Block structure} & {\scriptsize\textbf{Orthogonalization in theory}}
& {\scriptsize\textbf{Nesterov}} & \textbf{Smoothness and noise}
& \textbf{Stationarity and dimension factors} \\
\midrule
\citet{Pethick2025NormConstrainedLMO}\newline (Scion/uSCG)
& General-norm theorem; layer-wise design
& Exact LMO; polar factor for Muon
& No\textsuperscript{a}
& Norm smoothness; bounded Euclidean variance
& Dual-norm gradient; geometry-dependent constants \\
\addlinespace[4pt]
\citet{Riabinin2025Gluon}\newline (Gluon)
& Explicitly layer-wise
& Exact block LMOs
& No
& Block generalized smoothness; block dual-norm variance
& Weighted block dual norms; norm-specific constants \\
\addlinespace[4pt]
\citet{Shen2025MuonConvergence}
& Single matrix
& Exact SVD-polar factor
& No
& Frobenius or spectral smoothness; bounded Frobenius variance
& Frobenius/nuclear gradient; rank factors in stochastic bounds \\
\addlinespace[4pt]
\citet{KimOh2026MuonNewtonSchulz}
& Single matrix
& Finite Taylor NS; polar-error analysis
& No
& Spectral smoothness; bounded Frobenius variance
& Average nuclear gradient; explicit rank factors \\
\addlinespace[4pt]
\citet{Choudhury2026NesterovMuon}
& Single matrix
& Inexact polar, including NS; relative alignment bound
& Yes
& Frobenius $L$-smoothness; gradient-dependent $\alpha$-moment noise, $1<\alpha\le2$
& Best-iterate Frobenius gradient; $d_0$-dependent constants\textsuperscript{b} \\
\addlinespace[4pt]
\citet{LiTsuchiya2026FiniteMuon}
& Single-matrix online learner
& Finite Taylor NS; fixed normalization
& No
& Nonsmooth objectives allowed; moment bounds and almost-sure operator-norm gradient bound
& Online-to-nonconvex stationarity; rank-dependent bounds \\
\midrule
\textbf{This work}\newline Theorem~\ref{thm:muon_dimension_independent}
& \textbf{Explicitly layer-wise}
& \textbf{Fixed finite NS}; includes the practical five-step quintic
& \textbf{Yes}
& Joint $(\mathcal L_0,\mathcal L_1,q)$-smoothness; layer-wise conditional Frobenius variance
& \textbf{Average Frobenius gradient}, $\mathcal O(T^{-1/4})$; \textbf{no size/rank factors}\textsuperscript{c} \\
\bottomrule
\end{tabularx}
\vspace{4pt}
\begin{minipage}{\textwidth}
\scriptsize
\textsuperscript{a}The Muon specialization in the uSCG theory uses non-Nesterov momentum; Scion also develops layer-wise norm choices.
\textsuperscript{b}$d_0=\min\{m,n\}$.
\textsuperscript{c}With fixed block count, NS depth and coefficients, and problem constants; the vanishing rate uses coupled momentum and learning-rate schedules.
``Single matrix'' describes the theorem's formulation, not an impossibility of extending it.
The norms, noise models, and output criteria differ, so the rows are not ordered by a common strength of assumptions.
For our implementation scope, see Remark~\ref{rem:df_normalization}.
\end{minipage}
\endgroup
\end{table}

The main obstacle is not only the approximation error relative to a polar
factor. For a fixed number $J$ of iterations, let $\phi_J$ be the composite
scalar polynomial acting on normalized singular values. Since
$\phi_J(s)\to0$ as $s\downarrow0$, a uniform positive lower bound on all
nonzero output singular values is unavailable over arbitrary spectra.
However, the ratio $\phi_J(s)/s$ can remain bounded above and away from zero.
Together with Frobenius normalization of each block, this yields descent and
update-energy bounds that do not introduce a rank factor. We use these bounds
without replacing the finite map by an exact polar direction.

The resulting theorem controls the expected average Frobenius gradient norm
along the original stochastic iterates. Its proof retains the same three
parts of each block's Nesterov tracking error: the initial momentum bias,
the accumulated stochastic noise, and the drift of the true gradient under
joint updates. Layer-specific noise levels remain visible through
$S_{\mathsf F}=\sum_{l=1}^L\sigma_{(l)}$, rather than through a matrix-rank
conversion. The $\mathcal O(T^{-1/4})$ rate follows by coupling the learning
rate and momentum parameters. Taking the same coefficient in the momentum
and Nesterov correction is included, not removed for the analysis. For the
original quintic, the scalar-map requirements hold for every fixed $J\ge1$,
in particular for $J=5$; they can also be checked for schedules using
different coefficients at different steps.

Section~\ref{sec:main_results} develops these results within one proof
framework. It first gives a rank-sensitive nuclear-norm bound under a
positive output singular-value floor, then removes that condition by using
the finite map and measuring stationarity in Frobenius norm. The latter
result uses neither a bounded-gradient nor a noise-symmetry assumption.
Dimension independence here concerns the optimization bound with fixed
block count and problem constants; it is not a claim about how those
constants change across neural-network architectures.

\section{Preliminary}
\label{sec:preliminary}
We minimize $F(X)=\mathbb E_\xi[f(X;\xi)]$ over matrix blocks
$X=(X^{(1)},\ldots,X^{(L)})$. Algorithm~\ref{alg:muon_layerwise} uses
$M_t^{(l)}$ for the first-order momentum and $N_t^{(l)}$ for the direction
passed to the orthogonalization routine. The setting $\beta_2=1$ gives
ordinary momentum, whereas $\beta_2=\beta_1=\beta$ gives
\begin{equation}
\label{eq:practical_nesterov}
N_t^{(l)}=\beta M_t^{(l)}+(1-\beta)G_t^{(l)},
\end{equation}
the single-coefficient Nesterov correction used in the reference
implementation.\footnote{The reference code uses an exponential moving
average for the momentum and a second interpolation with the same
coefficient: \url{https://github.com/KellerJordan/Muon/blob/master/muon.py}.}
The routine may be inexact; its finite Newton--Schulz specialization is
specified in Section~\ref{sec:finite_spectral_map}.

Layer-wise refers to the separate matrix operations in
Algorithm~\ref{alg:muon_layerwise}: the normalization in each call to
orthogonalization uses that block's $N_t^{(l)}$, not the norm of the full
momentum tuple. All $G_t^{(l)}$ are evaluated at $X_t$ before the joint iterate
$X_{t+1}$ is formed. This is not cyclic block-coordinate descent, and no
cross-block derivatives are set to zero. The common sample $\xi_t$ may also
correlate the noise across blocks.

Here ``practical'' refers to retaining the layer-wise matrix updates,
finite Newton--Schulz map, and Nesterov correction, rather than analyzing an
exact-polar or globally orthogonalized surrogate. The
mathematical iteration is studied in exact arithmetic, without decoupled
weight decay or a separate optimizer for other parameter groups.
Remark~\ref{rem:df_normalization} treats normalization stabilizers and fixed
output scaling separately.

\renewcommand{\algorithmiccomment}[1]{\hfill $\triangleright$ #1}

\begin{algorithm*}[t]
\caption{Practical layer-wise Muon with Nesterov momentum and inexact orthogonalization}
\label{alg:muon_layerwise}
\begin{algorithmic}[1]
\Require learning rate $\gamma > 0$, momentum $\beta_1\in[0,1)$, $\beta_2\in[0,1]$, total iterations $T$, number of layers $L$.
\State \textbf{Initialize:} $M_{0}^{(l)} \leftarrow 0$, $X_{1}^{(l)} \in \mathcal{S}_l$ for each layer $l \in \{1, \dots, L\}$.
\For{$t = 1$ to $T$}
    \For{each layer $l = 1$ to $L$}
        \State $G_{t}^{(l)} \leftarrow \nabla_{(l)} f(X_{t}; \xi_t)$ \Comment{Compute layer-wise batch gradients}
        \State $M_{t}^{(l)} \leftarrow \beta_1 M_{t-1}^{(l)} + (1-\beta_1)G_{t}^{(l)}$ \Comment{Update first-order momentum}
        \State $N_{t}^{(l)} \leftarrow \beta_2 M_{t}^{(l)} + (1-\beta_2) G_{t}^{(l)}$ \Comment{Apply Nesterov correction}
        \State $O_{t}^{(l)} \leftarrow \mathrm{Inexact\text{-}Orthogonalization} (N_{t}^{(l)})$ \Comment{Generate approximate polar proxy}
        \State $X_{t+1}^{(l)} \leftarrow X_{t}^{(l)} - \gamma O_{t}^{(l)}$ \Comment{Update parameters} 
    \EndFor 
\EndFor
\end{algorithmic} 
\end{algorithm*}

\begin{proposition}[Spectral Structure of Inexact Orthogonalization]
\label{prop:spectral_structure}
Let the compact SVD of the extrapolated momentum be $N_t = U \Sigma V^\top \in \mathbb{R}^{m \times n}$ with $\Sigma = \mathrm{diag}(\sigma_1, \dots, \sigma_r)$, where $r = \mathrm{rank}(N_t)$. A singular-vector-preserving orthogonalization routine gives $O_t = U \boldsymbol{\varsigma} V^\top$, where $\boldsymbol{\varsigma} = \mathrm{diag}(\varsigma_1, \dots, \varsigma_r)$. For a finite polynomial routine with input normalization $\nu_t>0$, the modified values are $\varsigma_i=\phi_J(\sigma_i/\nu_t)$, where $J$ denotes the number of polynomial steps and is distinct from the smoothness exponent $q$. Suppose these values are nonnegative. Then:
\begin{enumerate}
    \item \textbf{Directional Alignment:} $\langle N_t, O_t \rangle \ge \varsigma_{\min} \| N_t \|_*$, where $\varsigma_{\min} = \min_i \varsigma_i$.
    \item \textbf{Frobenius Norm Bound:} $\| O_t \|_\mathsf{F} \le \varsigma_{\max}\sqrt{r}$, where $\varsigma_{\max} = \max_i \varsigma_i$.
\end{enumerate}
If $N_t=0$, take $O_t=0$ and interpret both inequalities directly, without a minimum over an empty spectrum. Uniform positive lower bounds on the nonzero $\varsigma_i$ are an additional condition in Theorem~\ref{thm:muon_convergence}; they do not follow merely from using finitely many Newton--Schulz steps.
\end{proposition}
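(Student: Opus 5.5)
Both claims follow from writing $N_t$ and $O_t$ in the common singular basis supplied by the compact SVD and using orthonormality of the columns of $U$ and $V$. I would first dispose of the degenerate case $N_t=0$. There $O_t=0$, so $\langle N_t,O_t\rangle=0=\varsigma_{\min}\|N_t\|_*$ under any convention for $\varsigma_{\min}$, because $\|N_t\|_*=0$. Likewise $\|O_t\|_{\mathsf F}=0\le\varsigma_{\max}\sqrt{r}$ with $r=0$. From now on take $r\ge1$, so that $\varsigma_{\min}$ and $\varsigma_{\max}$ are minima and maxima over a nonempty finite set.

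For directional alignment, I would compute the inner product through the trace and the cyclic property:
\[
\langle N_t,O_t\rangle=\operatorname{tr}\bigl(V\Sigma U^\top U\boldsymbol{\varsigma}V^\top\bigr)=\operatorname{tr}\bigl(\Sigma\boldsymbol{\varsigma}\,V^\top V\bigr)=\sum_{i=1}^r\sigma_i\varsigma_i .
\]
This uses $U^\top U=V^\top V=I_r$, which holds for the compact SVD. Every term has $\sigma_i>0$ and $\varsigma_i\ge\varsigma_{\min}$. The hypothesis that the modified values are nonnegative guarantees that $\varsigma_{\min}\ge 0$, though the bound $\sum_i\sigma_i\varsigma_i\ge\varsigma_{\min}\sum_i\sigma_i$ only needs $\sigma_i\ge0$. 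It follows that $\sum_i\sigma_i\varsigma_i\ge\varsigma_{\min}\sum_i\sigma_i=\varsigma_{\min}\|N_t\|_*$, since the nuclear norm is the sum of the singular values.

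For the Frobenius bound, unitary invariance gives
\[
\|O_t\|_{\mathsf F}^2=\operatorname{tr}\bigl(V\boldsymbol{\varsigma}U^\top U\boldsymbol{\varsigma}V^\top\bigr)=\sum_{i=1}^r\varsigma_i^2\le r\,\varsigma_{\max}^2 .
\]
Taking square roots yields the claim.

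There is no real obstacle in this argument. The only points needing care are these. First, the representation $O_t=U\boldsymbol{\varsigma}V^\top$ is the assumed singular-vector-preserving form; for an odd polynomial $\phi_J$ it follows from $N(N^\top N)^k=U\Sigma^{2k+1}V^\top$. Second, that representation must use the same compact factors as the SVD of $N_t$, so that the cross terms vanish. Third, the remark about floors should be kept separate: $\varsigma_{\min}$ may be arbitrarily small, because $\phi_J(s)\to0$ as $s\downarrow0$. The proposition does not need a positive floor; it only yields a useful alignment bound when $\varsigma_{\min}>0$, which is exactly the extra condition deferred to Theorem~\ref{thm:muon_convergence}.
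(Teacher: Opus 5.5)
Your proof is correct and is essentially the paper's argument. The paper states this proposition without a separate proof, and its proof of Proposition~\ref{prop:finite_map_geometry} uses the same shared-singular-basis computation: odd polynomial steps act diagonally, so $\langle N,O\rangle=\sum_i\sigma_i\varsigma_i$ and $\|O\|_{\mathsf F}^2=\sum_i\varsigma_i^2$. One point to state explicitly: nonnegativity is what justifies $\varsigma_i^2\le\varsigma_{\max}^2$ in your Frobenius step, since a large negative $\varsigma_i$ would break it, whereas, as you note, the alignment inequality does not need it.
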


\begin{assumption}[Objective Lower Boundedness]
\label{assump:lower_boundedness}
The objective function $F$ is lower bounded on $\mathcal{S}$; that is, there exists a constant $F^*>-\infty$ such that
\begin{equation}
F(X)\ge F^*, \qquad \forall X\in\mathcal{S}.
\end{equation}
\end{assumption}

\begin{assumption}[Global $(\mathcal{L}_0, \mathcal{L}_1, q)$-Smoothness]
\label{assump:layer_wise_smoothness}
Let $\mathcal{S} = \mathbb{R}^{n_1 \times m_1} \times \dots \times \mathbb{R}^{n_L \times m_L}$, and equip the complete parameter tuple with
$\|X\|_{\mathsf F}^2=\sum_{l=1}^L\|X^{(l)}\|_{\mathsf F}^2$.
The differentiable objective $F:\mathcal S\to\mathbb R$ satisfies, for some $\mathcal L_0>0$, $\mathcal L_1\ge0$, and $q\in[0,1]$, and all $X,Y\in\mathcal S$,
\begin{equation}
\begin{aligned}
&\|\nabla F(X)-\nabla F(Y)\|_{\mathsf F}\\
&\quad\le \bigl(\mathcal L_0+\mathcal L_1\|\nabla F(X)\|_{\mathsf F}^{q}\bigr)
\|X-Y\|_{\mathsf F}.
\end{aligned}
\end{equation}
For $q=0$, the power is interpreted as $1$, including at a zero gradient. The condition applies when several blocks change simultaneously, as in Algorithm~\ref{alg:muon_layerwise}. A bound only for $X,Y$ differing in one block would not justify the joint descent and momentum-drift estimates used below.
\end{assumption}

\begin{assumption}[Layer-wise Variance]
\label{assump:layer_wise_variance}
The stochastic gradient oracle is assumed to have a layer-wise variance bound, where the parameter space $\mathcal{S} = \mathbb{R}^{n_1 \times m_1} \times \dots \times \mathbb{R}^{n_L \times m_L}$ is the Cartesian product of the weight spaces of all $L$ layers. Specifically, for each layer $l \in \{1, \dots, L\}$, there exists a constant $\sigma_{(l)} \ge 0$ such that, for any $X \in \mathcal{S}$, the noisy gradient $\nabla_{(l)} f(X; \xi)$ with respect to the $l$-th layer satisfies:
\begin{equation}
\begin{split}
&\mathbb{E}_{\xi} \left[ \nabla_{(l)} f(X; \xi) \right] = \nabla_{(l)} F(X), \\
&\mathbb{E}_{\xi} \left[ \left\| \nabla_{(l)} f(X; \xi) - \nabla_{(l)} F(X) \right\|_{\mathsf{F}}^2 \right] \le \sigma_{(l)}^2,
\end{split}
\end{equation}
where $\xi$ denotes the random sampling index and $\|\cdot\|_{\mathsf{F}}$ denotes the Frobenius norm.
At iteration $t$, $\xi_t$ is sampled independently of the past $\mathcal F_{t-1}$, which includes $X_t$. Thus the same statements hold conditionally on $\mathcal F_{t-1}$. Independence of the noise across layers is not required.
\end{assumption}

\paragraph{Scope of the assumptions.}
Assumption~\ref{assump:layer_wise_smoothness} includes ordinary global
$\mathcal L_0$-smoothness when $\mathcal L_1=0$ and otherwise allows the
smoothness bound to depend on the gradient norm. We use the joint condition
because all blocks are updated together. Assumption~\ref{assump:layer_wise_variance}
controls conditional variance, not the magnitude of every stochastic
gradient; no almost-sure gradient bound, noise symmetry, or sub-Gaussian
tail assumption is imposed. The finite-step result below further dispenses
with a positive lower bound on nonzero output singular values. These are
the specific relaxations used in the analysis; no additional architecture
assumption is needed.

\section{Main Results}
\label{sec:main_results}
Throughout, $\nabla F_{(l)}(X)$ and $\nabla_{(l)}F(X)$ denote the same gradient
block. We develop one descent--tracking argument for
Algorithm~\ref{alg:muon_layerwise}, first under output singular-value bounds
and then with the geometry of the finite Newton--Schulz map.
The block sums retain layer-specific noise levels, while gradient drift is
controlled for the joint move $X_{t+1}-X_t$ on the coupled objective.

\subsection{A common descent and tracking argument}
\label{sec:rank_sensitive_convergence}
\label{sec:finite_spectral_map}
\label{sec:dimension_independence}
The proof has four steps: derive and telescope a descent inequality;
expand the Nesterov tracking error; bound its initialization, noise, and
gradient-drift terms; and substitute these bounds to absorb the remaining
gradient term. The orthogonalization routine enters through alignment and
update-energy bounds. We establish these bounds in two forms below.

\paragraph{Control through output singular values.}
The first form uses Proposition~\ref{prop:spectral_structure} and gives a
nuclear-norm stationarity bound with explicit rank dependence. The positive
output singular-value floor is an additional spectral condition. The
finite-map estimates later in this subsection lead to the Frobenius result
in Section~\ref{sec:dimension_independent_convergence} without that condition.

\begin{theorem}[Convergence of Layer-wise Muon]
\label{thm:muon_convergence}
Suppose that Assumptions~\ref{assump:lower_boundedness},~\ref{assump:layer_wise_smoothness}, and~\ref{assump:layer_wise_variance} hold. Let $\{X_t\}_{t=1}^{T+1}$ be generated by Algorithm~\ref{alg:muon_layerwise} with $\beta_1\in[0,1), \beta_2 \in [0,1]$ and $\gamma>0$. For each layer $l$, let $r^{(l)}$ bound both $\mathrm{rank}(N_t^{(l)})$ and $\mathrm{rank}(N_t^{(l)}-\nabla_{(l)}F(X_t))$ along the iterates (the choice $r^{(l)}=\min\{n_l,m_l\}$ always suffices), and let the inexact orthogonalization satisfy Proposition~\ref{prop:spectral_structure} with fixed constants $\varsigma_{\min}>0$ and $\varsigma_{\max}>0$ uniformly over all layers and iterates. Define $\hat{\mathcal{L}}=\mathcal{L}_0+(1-q)\mathcal{L}_1$, $R_0=\sum_{l=1}^L r^{(l)}$, $S_0=\sum_{l=1}^L\sqrt{r^{(l)}}\sigma_{(l)}$, $g_l=\sqrt{r^{(l)}}\left\Vert\nabla F_{(l)}(X_1)\right\Vert_{\mathsf{F}}$, and $G_0=\sum_{l=1}^L g_l$.
Set
\[
\begin{aligned}
\Delta_{\gamma,\beta_1}:=\varsigma_{\min}
-\frac{\gamma\varsigma_{\max}^2 q\mathcal{L}_1R_0}{2}
-\frac{2\gamma\varsigma_{\max}^2q\mathcal{L}_1R_0}{1-\beta_1}.
\end{aligned}
\]
Then
\begin{equation}
\label{eq:muon_convergence}
\begin{aligned}
&\Delta_{\gamma,\beta_1}
\frac{1}{T}\sum_{t=1}^{T}\sum_{l=1}^L
{\mathbb E}\left[\left\Vert \nabla F_{(l)}(X_t) \right\Vert_*\right] \\
&\le
\frac{F(X_{1})-F^*}{\gamma T}
+ \frac{2\beta_2\varsigma_{\max}G_0}{T(1-\beta_1)} \\
&\;
+ 2\left(
\sqrt{1-\beta_1}
+\sqrt{2\beta_1(1-\beta_1\beta_2)(1-\beta_2)}
\right)\varsigma_{\max}S_0 \\
&\;
+ \frac{2\gamma\beta_2\varsigma_{\max}^2\hat{\mathcal{L}}}{1-\beta_1}
R_0
+ \frac{\gamma\varsigma_{\max}^2\hat{\mathcal{L}}R_0}{2}.
\end{aligned}
\end{equation}
\end{theorem}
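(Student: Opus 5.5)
We follow the four-step template announced at the start of this subsection: a descent inequality, an expansion of the Nesterov tracking error, separate bounds on its three parts, and absorption of the leftover gradient term.

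\textbf{Step 1: descent inequality.} Write $\Delta_t:=X_{t+1}-X_t=-\gamma(O_t^{(1)},\dots,O_t^{(L)})$. By Proposition~\ref{prop:spectral_structure}, $\|O_t^{(l)}\|_{\mathsf F}^2\le\varsigma_{\max}^2 r^{(l)}$, so $\|\Delta_t\|_{\mathsf F}^2\le\gamma^2\varsigma_{\max}^2R_0$. Integrating Assumption~\ref{assump:layer_wise_smoothness} along the segment from $X_t$ to $X_{t+1}$ gives
\[
F(X_{t+1})\le F(X_t)-\gamma\sum_l\langle\nabla_{(l)}F(X_t),O_t^{(l)}\rangle+\tfrac12\bigl(\mathcal L_0+\mathcal L_1\|\nabla F(X_t)\|_{\mathsf F}^q\bigr)\gamma^2\varsigma_{\max}^2R_0 .
\]
We linearize the power with Young's inequality, $a^q\le(1-q)+q a$, and use $\|\nabla F(X_t)\|_{\mathsf F}\le\sum_l\|\nabla_{(l)}F(X_t)\|_*$. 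This produces the constant $\hat{\mathcal L}$ together with a term $\tfrac{\gamma^2\varsigma_{\max}^2q\mathcal L_1R_0}{2}\sum_l\|\nabla_{(l)}F\|_*$, which is the first correction in $\Delta_{\gamma,\beta_1}$.

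For the inner product, set $E_t^{(l)}=N_t^{(l)}-\nabla_{(l)}F(X_t)$ and split
\[
\langle\nabla_{(l)}F,O\rangle=\langle N,O\rangle-\langle E,O\rangle\ge\varsigma_{\min}\|N\|_*-\varsigma_{\max}\|E\|_* .
\]
The first term uses alignment; the second uses spectral-norm duality, since $\|O\|_2\le\varsigma_{\max}$. Then $\|N\|_*\ge\|\nabla_{(l)}F\|_*-\|E\|_*$. Because $\varsigma_{\min}\le\varsigma_{\max}$, this gives
\[
\langle\nabla_{(l)}F,O\rangle\ge\varsigma_{\min}\|\nabla_{(l)}F\|_*-2\varsigma_{\max}\|E\|_* .
\]
Since $\mathrm{rank}\,E\le r^{(l)}$, we have $\|E\|_*\le\sqrt{r^{(l)}}\|E\|_{\mathsf F}$. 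Summing over $t$, telescoping with Assumption~\ref{assump:lower_boundedness}, and dividing by $\gamma T$ reduces the theorem to bounding $\frac{2\varsigma_{\max}}{T}\sum_{t,l}\sqrt{r^{(l)}}\,\mathbb E\|E_t^{(l)}\|_{\mathsf F}$.

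\textbf{Step 2: tracking error.} With $\epsilon_t=G_t-\nabla F(X_t)$ and $D_t=\nabla F(X_{t-1})-\nabla F(X_t)$ (per block), unroll the momentum recursion:
\[
M_t-\nabla F(X_t)=-\beta_1^{t}\nabla F(X_1)\cdot(\text{init})+\sum_{s\le t}\beta_1^{t-s}(1-\beta_1)\epsilon_s+\sum_{s\le t}\beta_1^{t-s+1}D_s .
\]
Hence
\[
E_t=\beta_2\bigl(M_t-\nabla F(X_t)\bigr)+(1-\beta_2)\epsilon_t .
\]
We handle the three parts as follows.

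\begin{itemize}
\item[(i)] \emph{Initialization bias.} This is $\beta_2\beta_1^{t}\|\nabla F_{(l)}(X_1)\|$. Summing the geometric series gives $\beta_2 g_l/(1-\beta_1)$, which yields the $G_0$ term.
\item[(ii)] \emph{Noise.} The combined coefficient on $\epsilon_s$ is $\beta_2(1-\beta_1)\beta_1^{t-s}$ for $s<t$, and $\beta_2(1-\beta_1)+1-\beta_2=1-\beta_1\beta_2$ at $s=t$. Since the $\epsilon_s$ form a martingale-difference sequence under $\mathcal F_{s-1}$ (Assumption~\ref{assump:layer_wise_variance}), Jensen's inequality gives
\[
\mathbb E\|\cdot\|_{\mathsf F}\le\sigma_{(l)}\sqrt{(1-\beta_1\beta_2)^2+\beta_2^2(1-\beta_1)^2\beta_1^2/(1-\beta_1^2)} .
\]
We must then show this is at most $\sqrt{1-\beta_1}+\sqrt{2\beta_1(1-\beta_1\beta_2)(1-\beta_2)}$, which is an elementary scalar inequality in $(\beta_1,\beta_2)$. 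We would first try writing $1-\beta_1\beta_2=(1-\beta_1)+\beta_1(1-\beta_2)$ and applying $\sqrt{a+b}\le\sqrt a+\sqrt b$.
\item[(iii)] \emph{Drift.} Smoothness gives
\[
\|D_s\|_{\mathsf F}\le\bigl(\mathcal L_0+\mathcal L_1\|\nabla F(X_s)\|^q\bigr)\gamma\varsigma_{\max}\sqrt{R_0}.
\]
We take the smoothness anchor at $X_s$ (the point where the gradient is evaluated) and linearize as in Step 1. Bounding each block's drift by the full joint drift and weighting by $\sqrt{r^{(l)}}$ gives $\sum_l\sqrt{r^{(l)}}\sqrt{R_0}\le R_0$ by Cauchy--Schwarz. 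Summing the geometric weights $\beta_1^{t-s+1}$ over $t$ gives a factor at most $1/(1-\beta_1)$. This produces $\frac{2\gamma\beta_2\varsigma_{\max}^2\hat{\mathcal L}R_0}{1-\beta_1}$, plus a gradient term $\frac{2\gamma\varsigma_{\max}^2q\mathcal L_1R_0}{1-\beta_1}\cdot\frac1T\sum_t\sum_l\|\nabla_{(l)}F\|_*$; here we drop $\beta_2\le1$ for simplicity.
\end{itemize}

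\textbf{Step 3: absorb and conclude.} Move both gradient-proportional terms to the left-hand side; together they form exactly $\Delta_{\gamma,\beta_1}$. Taking expectations then gives \eqref{eq:muon_convergence}.

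The main obstacle is the drift term. The $\mathcal L_1\|\nabla F\|^q$ factor must be converted into a nuclear-norm gradient sum that can be absorbed. This requires choosing the smoothness anchor point carefully (the gradient at $X_{s-1}$ versus $X_s$) so that the reindexed sum matches $\frac1T\sum_t$ without boundary losses. A second point of care is that the bound $\|E\|_*\le\sqrt{r^{(l)}}\|E\|_{\mathsf F}$ relies on the rank hypothesis on $N_t-\nabla F(X_t)$ stated in the theorem. The noise-constant inequality in (ii) is the other place where a precise check is required.
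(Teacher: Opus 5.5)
Your plan follows the paper's argument: the same descent inequality with $a^q\le(1-q)+qa$, the same three-part expansion of $N_t^{(l)}-\nabla_{(l)}F(X_t)$, the same noise bound, and the same absorption. However, the drift step as you wrote it does not give the stated constants.

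Suppose you bound each block drift by the joint drift, $\|D_k^{(l)}\|_{\mathsf F}\le\|D_k\|_{\mathsf F}$, and then sum with weights $\sqrt{r^{(l)}}$. This produces the factor $\bigl(\sum_l\sqrt{r^{(l)}}\bigr)\sqrt{R_0}$. The inequality $\sum_l\sqrt{r^{(l)}}\sqrt{R_0}\le R_0$ that you attribute to Cauchy--Schwarz is false for $L\ge2$. Cauchy--Schwarz gives only $\sum_l\sqrt{r^{(l)}}\le\sqrt{LR_0}$, i.e.\ a factor $\sqrt L\,R_0$; for $L=2$ and $r^{(1)}=r^{(2)}=1$ you get $2\sqrt2>2$. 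Carried through, this inflates both the $\hat{\mathcal L}$ drift term and the $q\mathcal L_1$ part of $\Delta_{\gamma,\beta_1}$ by up to $\sqrt L$, so the theorem as stated does not follow.

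The repair, which is what the paper does, is to apply Cauchy--Schwarz to the weighted block sum before collapsing to the joint norm. After the triangle inequality over the geometric sum, use $\sum_l\sqrt{r^{(l)}}\|D_k^{(l)}\|_{\mathsf F}\le\sqrt{R_0}\,\bigl(\sum_l\|D_k^{(l)}\|_{\mathsf F}^2\bigr)^{1/2}=\sqrt{R_0}\,\|D_k\|_{\mathsf F}$. Only then invoke joint smoothness at base point $X_k$ with $\|X_k-X_{k-1}\|_{\mathsf F}\le\gamma\varsigma_{\max}\sqrt{R_0}$, which gives exactly $R_0$.

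By contrast, the points you flag as delicate are not real obstacles. The anchor choice is harmless. With base point $X_k$ the gradients that appear are at $X_2,\dots,X_T$, and after exchanging sums each carries weight $\beta_1\sum_{t\ge k}\beta_1^{t-k}\le(1-\beta_1)^{-1}$. Nonnegativity then lets you bound by the full average, and base point $X_{k-1}$ would work equally well.

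The noise constant also closes. The radicand equals exactly $\frac{1-\beta_1+2\beta_1(1-\beta_1\beta_2)(1-\beta_2)}{1+\beta_1}$, so you can drop the denominator and use $\sqrt{x+y}\le\sqrt x+\sqrt y$.

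Your bound $\langle E,O\rangle\le\|O\|_{\mathrm{op}}\|E\|_*$ differs from the paper's $\|E\|_{\mathsf F}\|O\|_{\mathsf F}$ but yields the same $\varsigma_{\max}\sqrt{r^{(l)}}\|E\|_{\mathsf F}$. Note that it relies on $\|O\|_{\mathrm{op}}=\max_i\varsigma_i\le\varsigma_{\max}$ from the spectral structure, rather than on the two displayed inequalities of the proposition alone.
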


\begin{corollary}[Rate of Layer-wise Muon with scheduled Nesterov correction]
\label{cor:muon_convergence}
Suppose that the conditions of Theorem~\ref{thm:muon_convergence} hold, with $F(X_1)-F^*>0$ and $S_0>0$ for the optimized parameter formulas below. Denote
$R_0=\sum_{l=1}^L r^{(l)}$,
$S_0=\sum_{l=1}^L\sqrt{r^{(l)}}\sigma_{(l)}$, and
$G_0=\sum_{l=1}^L\sqrt{r^{(l)}}\left\Vert\nabla F_{(l)}(X_1)\right\Vert_{\mathsf{F}}$.
Let
$\beta_2 = 1-(1-\beta_1)^a$ for $a>\frac{1}{2}$. Since $\beta_2\le1$, choose
$C_1,C_2,\gamma,\beta_1$ by minimizing the leading upper envelope
\[
\begin{aligned}
\Phi(\gamma,\beta_1):=&\frac{F(X_1)-F^*}{\gamma T}
+2\sqrt{1-\beta_1}\varsigma_{\max}S_0\\
&+\frac{2\gamma\varsigma_{\max}^2\hat{\mathcal{L}}R_0}{1-\beta_1},
\end{aligned}
\]
whose Young equality condition is
\[
\begin{aligned}
\frac{4(F(X_1)-F^*)}{\gamma T}
=4\sqrt{1-\beta_1}\varsigma_{\max}S_0
=\frac{8\gamma\varsigma_{\max}^2\hat{\mathcal{L}}R_0}{1-\beta_1}.
\end{aligned}
\]
This gives
\[
\begin{aligned}
C_1 &= \frac{(F(X_1)-F^*)^{\sfrac{3}{4}}}
{2^{\sfrac{1}{4}}\varsigma_{\max}S_0^{\sfrac{1}{2}}\hat{\mathcal{L}}^{\sfrac{1}{4}}R_0^{\sfrac{1}{4}}},\\
C_2 &=
\frac{2^{\sfrac{1}{2}}\hat{\mathcal{L}}^{\sfrac{1}{2}}(F(X_1)-F^*)^{\sfrac{1}{2}}R_0^{\sfrac{1}{2}}}{S_0},\\
\gamma &=
\frac{C_1}{T^{\sfrac{3}{4}}},
\qquad
\beta_1 = 1-\frac{C_2}{T^{\sfrac{1}{2}}}.
\end{aligned}
\]
Assume that $T$ is large
enough such that $\beta_1\in[0,1)$ and
\begin{equation}
\label{eq:original_horizon}
T \ge
\left(
\left(
\frac{C_1\varsigma_{\max}^2q\mathcal{L}_1R_0}
{\varsigma_{\min}}
\right)^{\sfrac{1}{3}}
+
\frac{4C_1\varsigma_{\max}^2q\mathcal{L}_1R_0}
{C_2\varsigma_{\min}}
\right)^4 .
\end{equation}
Then the iterates generated by Algorithm~\ref{alg:muon_layerwise} satisfy
\begin{equation}
\begin{aligned}
&\frac{1}{T}\sum_{t=1}^{T}\sum_{l=1}^L
{\mathbb E}\left[\left\Vert\nabla F_{(l)}(X_t)\right\Vert_*\right]
\le \frac{2\cdot512^{\sfrac{1}{4}}\varsigma_{\max}
\hat{\mathcal{L}}^{\sfrac{1}{4}}S_0^{\sfrac{1}{2}}R_0^{\sfrac{1}{4}}}{\varsigma_{\min}T^{\sfrac{1}{4}}}
\\
&\quad \cdot (F(X_1)-F^*)^{\sfrac{1}{4}}
+ \frac{4\varsigma_{\max}G_0}{C_2\varsigma_{\min}T^{\sfrac{1}{2}}}
\\
&\quad + \frac{4\sqrt{2}\varsigma_{\max}S_0}
{\varsigma_{\min}}
\left(\frac{ C_2^{\sfrac{(1+a)}{2}}}{T^{\sfrac{(1+a)}{4}}}
+\frac{ C_2^{a}}{T^{\sfrac{a}{2}}}\right)
\\
&\quad + \frac{C_1\varsigma_{\max}^2\hat{\mathcal{L}}R_0}
{\varsigma_{\min}T^{\sfrac{3}{4}}}.
\end{aligned}
\end{equation}
Consequently,
\begin{equation}
\frac{1}{T}\sum_{t=1}^{T}\sum_{l=1}^L
{\mathbb E}\left[\left\Vert\nabla F_{(l)}(X_t)\right\Vert_*\right]
=\mathcal{O}\left(T^{-\sfrac{1}{4}}\right).
\end{equation}
\end{corollary}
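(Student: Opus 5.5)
The plan is to specialize Theorem~\ref{thm:muon_convergence} to the stated schedules and check each term. First, the horizon condition \eqref{eq:original_horizon} gives $\Delta_{\gamma,\beta_1}\ge\varsigma_{\min}/2$. With $\gamma=C_1T^{-3/4}$ and $1-\beta_1=C_2T^{-1/2}$, the two subtracted terms in $\Delta_{\gamma,\beta_1}$ are
\[
\frac{C_1\varsigma_{\max}^2q\mathcal L_1R_0}{2T^{3/4}}
\quad\text{and}\quad
\frac{2C_1\varsigma_{\max}^2q\mathcal L_1R_0}{C_2T^{1/4}} .
\]
Let $A=\bigl(C_1\varsigma_{\max}^2q\mathcal L_1R_0/\varsigma_{\min}\bigr)^{1/3}$ and $B=4C_1\varsigma_{\max}^2q\mathcal L_1R_0/(C_2\varsigma_{\min})$. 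The condition $T^{1/4}\ge A+B$ gives both $T^{3/4}\ge A^3$ and $T^{1/4}\ge B$. The first term is then at most $\varsigma_{\min}/2$ times $A^3/T^{3/4}$, and the second at most $\varsigma_{\min}/2$ times $B/T^{1/4}$. I expect the author's constants to make these two fractions sum to at most $1$; if they do not exactly, I will use $A^3/T^{3/4}\le (A/(A+B))^3$ and $B/T^{1/4}\le B/(A+B)$, which together are at most $1$ by convexity. Hence $\Delta\ge\varsigma_{\min}/2$, and dividing the right-hand side of \eqref{eq:muon_convergence} by $\Delta$ costs a factor $2/\varsigma_{\min}$.

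Next, I substitute term by term.

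\textbf{Leading envelope.} The three terms of $\Phi$ are exactly the Young-balanced ones, so with the given $C_1,C_2$ the defining equalities make them equal. I will verify that $C_1,C_2$ solve the two equalities: $(F(X_1)-F^*)/(C_1T^{1/4})=\sqrt{C_2}\,\varsigma_{\max}S_0T^{-1/4}=2C_1\varsigma_{\max}^2\hat{\mathcal L}R_0/(C_2T^{1/4})$. Then $\Phi$ equals three times a common value of order $\varsigma_{\max}\hat{\mathcal L}^{1/4}S_0^{1/2}R_0^{1/4}(F(X_1)-F^*)^{1/4}T^{-1/4}$. Multiplying by $2/\varsigma_{\min}$ and simplifying the powers of $2$ should produce the $2\cdot512^{1/4}$ constant. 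This is the bookkeeping step I expect to be most error-prone: the theorem's $\beta_2$-weighted drift term $2\gamma\beta_2\varsigma_{\max}^2\hat{\mathcal L}R_0/(1-\beta_1)$ is bounded using $\beta_2\le1$ by the envelope term, so the envelope covers it.

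\textbf{Initialization.} Using $\beta_2\le1$, the term $2\beta_2\varsigma_{\max}G_0/(T(1-\beta_1))\le 2\varsigma_{\max}G_0/(C_2T^{1/2})$, and it becomes $4\varsigma_{\max}G_0/(C_2\varsigma_{\min}T^{1/2})$ after the factor $2/\varsigma_{\min}$.

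\textbf{Nesterov noise.} Since $1-\beta_2=(1-\beta_1)^a$ and $\beta_1,\ 1-\beta_1\beta_2\le1$,
\[
\sqrt{2\beta_1(1-\beta_1\beta_2)(1-\beta_2)}\le\sqrt2\,(C_2T^{-1/2})^{a/2}.
\]
Here the target bound has $C_2^{(1+a)/2}T^{-(1+a)/4}+C_2^aT^{-a/2}$, so I would refine using $1-\beta_1\beta_2=(1-\beta_1)+\beta_1(1-\beta_2)\le(1-\beta_1)+(1-\beta_1)^a$. Then $\sqrt{x+y}\le\sqrt x+\sqrt y$ yields $\sqrt2\bigl((1-\beta_1)^{(1+a)/2}+(1-\beta_1)^{a}\bigr)$, matching exactly. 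Times $2\varsigma_{\max}S_0\cdot 2/\varsigma_{\min}$, this gives the stated $4\sqrt2$ coefficient. The term $2\sqrt{1-\beta_1}\varsigma_{\max}S_0$ is part of $\Phi$.

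\textbf{Last term.} The term $\gamma\varsigma_{\max}^2\hat{\mathcal L}R_0/2$ times $2/\varsigma_{\min}$ gives $C_1\varsigma_{\max}^2\hat{\mathcal L}R_0/(\varsigma_{\min}T^{3/4})$.

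Finally, since $a>1/2$, both $(1+a)/4>1/4$ and $a/2>1/4$. The remaining terms are $O(T^{-1/2})$ and $O(T^{-3/4})$. So every term is $O(T^{-1/4})$, which gives the consequence.
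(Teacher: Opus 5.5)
Your proposal is correct and follows essentially the paper's route: the horizon gives $\Delta_{\gamma,\beta_1}\ge\varsigma_{\min}/2$ (your $s^3\le s$ argument with $s=A/(A+B)$ is exactly Lemma~\ref{lemma_4} with $\alpha=1$, $\beta=3$), then you balance the envelope by Young, and substitute term by term using $\beta_2\le1$ and $1-\beta_1\beta_2\le(1-\beta_1)+(1-\beta_1)^a$ for the Nesterov noise. One slip needs fixing: the equality conditions do not make the three terms of $\Phi$ equal. Your own displayed equalities show that the first and third terms equal a common value $c=2^{1/4}\varsigma_{\max}\hat{\mathcal L}^{1/4}S_0^{1/2}R_0^{1/4}(F(X_1)-F^*)^{1/4}T^{-1/4}$, while the middle term $2\sqrt{C_2}\,\varsigma_{\max}S_0T^{-1/4}$ equals $2c$. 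Hence $\Phi=4c$ rather than $3c$, and $4\cdot2^{1/4}=512^{1/4}$ is exactly the stated constant.
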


\paragraph{Control through the finite Newton--Schulz map.}
To use the same proof for a fixed number of polynomial steps, we derive
alignment and energy bounds from the actual scalar map. Its output can
approach zero on small input singular values, so the preceding theorem's
positive output floor is not generally available. The ratio of output to
input singular values provides the required control instead.
For $N\ne0$, initialize $Z_0=N/\|N\|_{\mathsf F}$ and take a fixed number
$J$ of polynomial steps
\begin{equation}
\label{eq:finite_ns_matrix}
\begin{split}
Z_j={}&a_jZ_{j-1}+b_j(Z_{j-1}Z_{j-1}^{\top})Z_{j-1}\\
&+c_j(Z_{j-1}Z_{j-1}^{\top})^2 Z_{j-1},
\qquad j=1,\ldots,J,
\end{split}
\end{equation}
with $O=Z_J$; set $O=0$ when $N=0$. The coefficients may vary with $j$,
but $J$ and their values are fixed independently of matrix size and $T$.
All statements concern exact arithmetic. Write
\begin{equation}
\label{eq:finite_ns_scalar}
p_j(s)=a_js+b_js^3+c_js^5,\qquad
\phi_J=p_J\circ\cdots\circ p_1.
\end{equation}
We require the following property of this scalar map:
\begin{equation}
\label{eq:finite_ns_constants}
\begin{aligned}
0<\ell_J&:=\inf_{0<s\le1}\frac{\phi_J(s)}s,
&u_J&:=\sup_{0<s\le1}\frac{\phi_J(s)}s<\infty,\\
c_J&:=\max_{0\le s\le1}\phi_J(s).
\end{aligned}
\end{equation}
The ratio has the continuous extension
$\phi_J'(0)=\prod_{j=1}^Ja_j$ at zero. Thus these are scalar constants, not
bounds on a matrix's smallest nonzero singular value. Positive lower bounds
and finite upper bounds can also be used in place of the extrema in
\eqref{eq:finite_ns_constants}.

\begin{proposition}[Finite-map alignment and energy]
\label{prop:finite_map_geometry}
Let $N=U\Sigma V^\top\ne0$, $R=\|N\|_{\mathsf F}$, and
$s_i=\sigma_i(N)/R$. Then $\sum_i s_i^2=1$ and the direction in
\eqref{eq:finite_ns_matrix} is
$O=U\operatorname{diag}(\phi_J(s_i))V^\top$. Define
\begin{equation}
\label{eq:spectral_AB}
A(N)=\sum_i s_i\phi_J(s_i),\qquad
B(N)=\sum_i\phi_J(s_i)^2.
\end{equation}
The exact identities and bounds are
\begin{equation}
\label{eq:finite_map_energy}
\begin{aligned}
\langle N,O\rangle&=R A(N),&\quad \|O\|_{\mathsf F}^2&=B(N),\\
\ell_J\le A(N)&\le u_J,& B(N)&\le u_J A(N)\le u_J^2,\\
\|O\|_{\mathrm{op}}&\le c_J.&
\end{aligned}
\end{equation}
Moreover, with
\begin{equation}
\label{eq:angular_constant}
\bar\rho_J:=\frac{u_J+\ell_J}{2\sqrt{u_J\ell_J}},
\end{equation}
we have $\sqrt{B(N)}\le\bar\rho_J A(N)$. Consequently, for any matrix
$H$ and $E=N-H$,
\begin{equation}
\label{eq:full_shape_alignment}
\langle H,O\rangle\ge
A(N)\bigl(\|N\|_{\mathsf F}-\bar\rho_J\|E\|_{\mathsf F}\bigr).
\end{equation}
No rank factor appears in these statements.
\end{proposition}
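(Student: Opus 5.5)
We prove the statement in four steps: the spectral form of $O$, the exact identities and scalar bounds for $A$ and $B$, the angular bound $\sqrt{B}\le\bar\rho_J A$, and finally the alignment inequality \eqref{eq:full_shape_alignment}.

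\textbf{Spectral form of $O$.} Since $R=\|N\|_{\mathsf F}$, the identity $\sum_i s_i^2=1$ is immediate. Writing $Z_0=U\operatorname{diag}(s_i)V^\top$, we argue by induction on $j$. If $Z_{j-1}=U D V^\top$ with $D$ diagonal, then $Z_{j-1}Z_{j-1}^\top=UD^2U^\top$. Hence \eqref{eq:finite_ns_matrix} gives $Z_j=U\,p_j(D)\,V^\top$, and therefore $O=U\operatorname{diag}(\phi_J(s_i))V^\top$.

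\textbf{Identities and scalar bounds.} From this representation,
\[
\langle N,O\rangle=\sum_i \sigma_i\phi_J(s_i)=R\,A(N),\qquad \|O\|_{\mathsf F}^2=B(N).
\]
Each $s_i\in(0,1]$, so $\ell_J s_i\le\phi_J(s_i)\le u_J s_i$. In particular $\phi_J(s_i)\ge0$, and $\|O\|_{\mathrm{op}}=\max_i\phi_J(s_i)\le c_J$. Multiplying the two-sided bound by $s_i$ and summing with $\sum_i s_i^2=1$ gives $\ell_J\le A\le u_J$. Since $0\le\phi_J(s_i)\le u_J s_i$, we have $\phi_J(s_i)^2\le u_J s_i\phi_J(s_i)$, so $B\le u_J A\le u_J^2$.

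\textbf{Angular bound $\sqrt B\le\bar\rho_J A$.} This is the main step. It is a Kantorovich/P\'olya--Szeg\H{o} type inequality, and the plan is the standard product trick. Set $x_i=s_i$ and $y_i=\phi_J(s_i)$. Because $\ell_J x_i\le y_i\le u_J x_i$, each term satisfies $(y_i-\ell_J x_i)(u_J x_i-y_i)\ge0$. Expanding and summing gives
\[
B+u_J\ell_J\sum_i x_i^2\le(u_J+\ell_J)A,\qquad\text{that is,}\qquad B+u_J\ell_J\le(u_J+\ell_J)A.
\]
By AM--GM, $B+u_J\ell_J\ge 2\sqrt{u_J\ell_J B}$. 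Combining the two inequalities yields $\sqrt B\le \frac{u_J+\ell_J}{2\sqrt{u_J\ell_J}}A=\bar\rho_J A$.

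\textbf{Alignment inequality.} Write $\langle H,O\rangle=\langle N,O\rangle-\langle E,O\rangle$. By Cauchy--Schwarz and the previous step,
\[
\langle H,O\rangle\ge RA-\|E\|_{\mathsf F}\sqrt B\ge A\bigl(\|N\|_{\mathsf F}-\bar\rho_J\|E\|_{\mathsf F}\bigr),
\]
which is \eqref{eq:full_shape_alignment}. All bounds used only $\sum_i s_i^2=1$ and never the number of singular values, so no rank factor appears.
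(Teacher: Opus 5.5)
Your proof is correct and follows essentially the same route as the paper. Both arguments use the spectral representation of $O$, the termwise bounds $\ell_J s_i\le\phi_J(s_i)\le u_J s_i$, the product inequality $(\phi_J(s_i)-\ell_J s_i)(u_J s_i-\phi_J(s_i))\ge0$ summed to $B+u_J\ell_J\le(u_J+\ell_J)A$, and Cauchy--Schwarz for the alignment. The only cosmetic difference is that you close the Kantorovich step with AM--GM, whereas the paper maximizes $(u_J+\ell_J)/A-u_J\ell_J/A^2$ over $A>0$.
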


\noindent\textbf{An explicit five-step instance.}\par
For the fixed quintic coefficients
$(a,b,c)=(3.4445,-4.775,2.0315)$, as used in the original Muon
implementation \citep{Jordan2024PracticalMuon},
one may use the fully analytic bounds
\begin{equation}
\label{eq:explicit_ns_constants}
\begin{aligned}
\ell_J&\ge\frac{17}{25},&
u_J&=3.4445^J,\\
c_J&\le\frac54,&J&\ge1.
\end{aligned}
\end{equation}
The lower bound uses invariant intervals of the whole scalar map; it is
stronger than multiplying a separate worst-case lower bound at every step.
For $J=5$, $u_5=3.4445^5\simeq484.8763$, and the displayed bounds imply
$\bar\rho_5<13.371$. These are certified bounds, not an assertion that a
numerical scan has found the sharp lower or angular constant. The details
are given in Appendix~\ref{app:explicit_ns_constants}.

The bounds $\langle N,O\rangle\ge\ell_J\|N\|_{\mathsf F}$ and
$\|O\|_{\mathsf F}\le u_J$ now replace the nuclear-norm alignment and
rank-dependent update bound in the same four-step argument. The Nesterov
error expansion is unchanged; joint gradient drift is controlled using
$\|X_{t+1}-X_t\|_{\mathsf F}\le\gamma u_J\sqrt L$.

\subsection{Dimension-independent convergence of practical Muon}
\label{sec:dimension_independent_convergence}
We apply the argument of Section~\ref{sec:rank_sensitive_convergence}
with its finite-map estimates. The resulting stationarity measure is the
sum of block Frobenius norms, which also bounds the Frobenius norm of the
complete gradient. No positive output singular-value floor or rank bound
for the momentum or tracking error is required.

\begin{theorem}[Dimension-independent Frobenius stationarity]
\label{thm:muon_dimension_independent}
Suppose Assumptions~\ref{assump:lower_boundedness},
\ref{assump:layer_wise_smoothness}, and
\ref{assump:layer_wise_variance} hold. Use
\eqref{eq:finite_ns_matrix} in Algorithm~\ref{alg:muon_layerwise}, with
\eqref{eq:finite_ns_constants}, $\beta_1\in[0,1)$,
$\beta_2\in[0,1]$, and $\gamma>0$. Define
\begin{equation}
\label{eq:df_notation}
\begin{aligned}
\delta&=1-\beta_1,\qquad D=F(X_1)-F^*,\\
S_{\mathsf F}&=\sum_{l=1}^L\sigma_{(l)},\\
G_{\mathsf F}&=\sum_{l=1}^L\|\nabla_{(l)}F(X_1)\|_{\mathsf F},\\
v_\beta&=\sqrt{\delta}
 +\sqrt{2\beta_1(1-\beta_1\beta_2)(1-\beta_2)},\\
\hat{\mathcal L}&=\mathcal L_0+(1-q)\mathcal L_1.
\end{aligned}
\end{equation}
Set
\begin{equation}
\label{eq:df_absorption}
\Delta^{\mathsf F}_{\gamma,\beta_1}:=
\ell_J-\frac{\gamma L u_J^2 q\mathcal L_1}{2}
-\frac{\gamma L u_J(\ell_J+u_J)q\mathcal L_1}{\delta}.
\end{equation}
Then
\begin{equation}
\label{eq:df_convergence}
\begin{aligned}
&\Delta^{\mathsf F}_{\gamma,\beta_1}\,
\frac1T\sum_{t=1}^T\sum_{l=1}^L
\mathbb E\|\nabla_{(l)}F(X_t)\|_{\mathsf F}\\
&\le\frac{D}{\gamma T}
+\frac{\beta_2(\ell_J+u_J)G_{\mathsf F}}{T\delta}
+(\ell_J+u_J)v_\beta S_{\mathsf F}\\
&\quad+\frac{\gamma\beta_2 L u_J(\ell_J+u_J)\hat{\mathcal L}}{\delta}
+\frac{\gamma L u_J^2\hat{\mathcal L}}{2}.
\end{aligned}
\end{equation}
In particular, a positive $\Delta^{\mathsf F}_{\gamma,\beta_1}$ gives a
bound on the original, unstopped sequence of iterates.
\end{theorem}

\begin{corollary}[Dimension-independent $T^{-1/4}$ rate]
\label{cor:muon_dimension_independent}
Under Theorem~\ref{thm:muon_dimension_independent}, fix
$\gamma_0,\delta_0>0$ and $a>1/2$, and set
\begin{equation}
\label{eq:df_schedule}
\gamma=\gamma_0T^{-3/4},\quad
1-\beta_1=\delta_0T^{-1/2},\quad
\beta_2=1-(1-\beta_1)^a.
\end{equation}
For all sufficiently large $T$, $\beta_1,\beta_2\in[0,1]$ and
$\Delta^{\mathsf F}_{\gamma,\beta_1}\ge\ell_J/2$. Writing $b_J=\ell_J+u_J$,
\begin{equation}
\label{eq:df_rate_explicit}
\begin{aligned}
&\frac1T\sum_{t=1}^T\sum_{l=1}^L
\mathbb E\|\nabla_{(l)}F(X_t)\|_{\mathsf F}\\
&\le\frac{2}{\ell_J}\Bigg[
\left(\frac D{\gamma_0}+b_JS_{\mathsf F}\sqrt{\delta_0}
+\frac{\gamma_0L u_J b_J\hat{\mathcal L}}{\delta_0}\right)T^{-1/4}\\
&\qquad+\frac{b_JG_{\mathsf F}}{\delta_0T^{1/2}}\\
&\qquad+\sqrt2 b_JS_{\mathsf F}\left(
\frac{\delta_0^{(1+a)/2}}{T^{(1+a)/4}}
+\frac{\delta_0^a}{T^{a/2}}\right)\\
&\qquad+\frac{\gamma_0L u_J^2\hat{\mathcal L}}{2T^{3/4}}
\Bigg].
\end{aligned}
\end{equation}
The same upper bound holds for
$T^{-1}\sum_t\mathbb E\|\nabla F(X_t)\|_{\mathsf F}$.
Its constant contains no $n_l$, $m_l$, $r^{(l)}$, or
$d=\sum_l n_lm_l$.
\end{corollary}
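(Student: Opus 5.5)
The corollary is a direct substitution into Theorem~\ref{thm:muon_dimension_independent}, so the plan has three steps: check admissibility of the schedule, lower-bound the absorption constant, and bound each right-hand term of \eqref{eq:df_convergence} under \eqref{eq:df_schedule}.

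\textbf{Admissibility.} With $\delta=\delta_0T^{-1/2}$, we have $\beta_1\in[0,1)$ as soon as $T\ge\delta_0^2$. Then $\beta_2=1-\delta^a\in[0,1]$ because $\delta\le1$.

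\textbf{Absorption constant.} Substituting $\gamma=\gamma_0T^{-3/4}$ into \eqref{eq:df_absorption} gives
\[
\Delta^{\mathsf F}_{\gamma,\beta_1}=\ell_J-\tfrac12\gamma_0Lu_J^2q\mathcal L_1T^{-3/4}-\gamma_0Lu_Jb_Jq\mathcal L_1\delta_0^{-1}T^{-1/4}.
\]
Both subtracted terms vanish as $T\to\infty$, so for $T$ beyond an explicit threshold their sum is at most $\ell_J/2$. This gives $\Delta^{\mathsf F}_{\gamma,\beta_1}\ge\ell_J/2$. Dividing \eqref{eq:df_convergence} by $\Delta^{\mathsf F}_{\gamma,\beta_1}$ then produces the prefactor $2/\ell_J$.

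\textbf{Term-by-term bounds.}
\begin{itemize}
\item The initialization term: $D/(\gamma T)=(D/\gamma_0)T^{-1/4}$.
\item The momentum-bias term: using $\beta_2\le1$, $\beta_2b_JG_{\mathsf F}/(T\delta)\le b_JG_{\mathsf F}/(\delta_0T^{1/2})$.
\item The drift term: using $\beta_2\le1$, $\gamma\beta_2Lu_Jb_J\hat{\mathcal L}/\delta\le(\gamma_0Lu_Jb_J\hat{\mathcal L}/\delta_0)T^{-1/4}$.
\item The curvature term: $\gamma Lu_J^2\hat{\mathcal L}/2=\gamma_0Lu_J^2\hat{\mathcal L}/(2T^{3/4})$.
\end{itemize}

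\textbf{Noise term (the only nontrivial step).} We need to bound $b_Jv_\beta S_{\mathsf F}$. The first part of $v_\beta$ is $\sqrt\delta=\sqrt{\delta_0}T^{-1/4}$. For the second part, note $1-\beta_2=\delta^a$ and $\beta_1\le1$. Also
\[
1-\beta_1\beta_2=1-(1-\delta)(1-\delta^a)=\delta+\delta^a-\delta^{1+a}\le\delta+\delta^a .
\]
Hence
\[
\sqrt{2\beta_1(1-\beta_1\beta_2)(1-\beta_2)}\le\sqrt2\,\sqrt{\delta^{1+a}+\delta^{2a}}\le\sqrt2\bigl(\delta^{(1+a)/2}+\delta^a\bigr),
\]
using $\sqrt{x+y}\le\sqrt x+\sqrt y$. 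Substituting $\delta=\delta_0T^{-1/2}$ gives exactly $\sqrt2\bigl(\delta_0^{(1+a)/2}T^{-(1+a)/4}+\delta_0^aT^{-a/2}\bigr)$. Because $a>1/2$, both exponents exceed $1/4$, so these terms are of lower order.

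Collecting the four bounds with the noise bound under the factor $2/\ell_J$ yields \eqref{eq:df_rate_explicit}.

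\textbf{Full-gradient norm.} By the triangle inequality in the product Frobenius norm, $\|\nabla F(X_t)\|_{\mathsf F}=\bigl(\sum_l\|\nabla_{(l)}F\|_{\mathsf F}^2\bigr)^{1/2}\le\sum_l\|\nabla_{(l)}F\|_{\mathsf F}$. So the same bound holds for $T^{-1}\sum_t\mathbb E\|\nabla F(X_t)\|_{\mathsf F}$.

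\textbf{Dimension-freeness.} The constant involves only $\ell_J$, $u_J$, $L$, $D$, $S_{\mathsf F}$, $G_{\mathsf F}$, $\hat{\mathcal L}$, $\mathcal L_1$, $\gamma_0$, $\delta_0$ and $a$. None of these involves $n_l$, $m_l$, $r^{(l)}$ or $d$.
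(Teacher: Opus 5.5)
Your proposal is correct and follows the paper's proof: you substitute the schedule into Theorem~\ref{thm:muon_dimension_independent}, secure $\Delta^{\mathsf F}_{\gamma,\beta_1}\ge\ell_J/2$ for large $T$, bound $v_\beta$ through $1-\beta_1\beta_2\le\delta+\delta^a$, and divide by $\ell_J/2$. The paper additionally makes the ``sufficiently large $T$'' threshold explicit as $(A_J+B_J^{1/3})^4$ via Lemma~\ref{lemma_4}, which the corollary's existence statement does not require.
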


\paragraph{The standard Nesterov choice is included.}
Setting $a=1$ in \eqref{eq:df_schedule} gives $\beta_2=\beta_1$ and hence
\eqref{eq:practical_nesterov}. In this case,
\begin{equation}
\label{eq:practical_nesterov_noise}
v_\beta=\sqrt\delta+\delta\sqrt{2\beta_1(1+\beta_1)}
\le\sqrt\delta+2\delta.
\end{equation}
Thus the extra Nesterov noise term is $\mathcal O(T^{-1/2})$, smaller than
the leading $\mathcal O(T^{-1/4})$ term. This specialization uses the usual
Nesterov algebra, with a horizon-dependent momentum coefficient; it does
not assert the same vanishing bound for a fixed numerical momentum
coefficient and a fixed batch size.

A sufficient explicit horizon in this corollary is
\begin{equation}
\label{eq:df_horizon}
\begin{aligned}
T&\ge\max\{1,\delta_0^2,(A_J+B_J^{1/3})^4\},\\
A_J&=\frac{2\gamma_0L u_J(\ell_J+u_J)q\mathcal L_1}
{\delta_0\ell_J},\qquad
B_J=\frac{\gamma_0L u_J^2q\mathcal L_1}{\ell_J}.
\end{aligned}
\end{equation}
For $q\mathcal L_1=0$, the absorption condition is automatic. The
initial-gradient constant can also be eliminated: joint generalized
smoothness and lower boundedness imply
\begin{equation}
\label{eq:df_initial_gradient}
G_{\mathsf F}\le\sqrt L\left[
Dq\mathcal L_1+
\sqrt{D^2q^2\mathcal L_1^2+2D\hat{\mathcal L}}\right].
\end{equation}
Thus, for fixed $L,J$, coefficient schedule, $\gamma_0,\delta_0,a$,
$\mathcal L_0$, $\mathcal L_1$, $q$, $D$, and $S_{\mathsf F}$, the complete displayed
bound is independent of parameter dimension. This is not a proof that
these problem quantities stay fixed when a model family changes.

\section{Discussion}
\label{sec:discussion}
We discuss the coupled momentum and learning-rate schedules, the meaning of
dimension independence, and the scope of the orthogonalization and
normalization choices covered by the analysis.

\subsection{Momentum schedules}
\label{sec:momentum_schedules}

\begin{remark}[Coupled Muon schedules versus fixed momentum]
\label{rem:coupled_muon_scaling}
There is a growing body of evidence that stable and transferable learning
rates are not architecture-agnostic.  The $\mu$P framework gives width-aware
hyperparameter transfer rules for large neural networks
\citep{MuP_2022}, and recent Muon pretraining studies combine Muon with
$\mu$P-style transfer and carefully chosen matrix update scales
\citep{Moun_LLM_2025,Moun_LLM_2025_1}.  For SGD/GD-style training, prior
analyses have also identified nontrivial depth dependence of maximal or
effective learning rates, including maximal initial learning rates in deep
ReLU networks \citep{Iyer2023MaximalLR}, depth-dependent $\mu$P learning rates
\citep{Jelassi2023DepthMuP}, and architecture-aware learning-rate rules that
depend on depth, width, kernel size, and graph topology
\citep{Chen2024ArchAwareHyperparams}.  In Transformer training with Adam,
\citet{LN_mean_field_2020} also showed that depth-dependent gradient behavior
is closely related to learning-rate warmup and normalization design.

The rank-sensitive bound makes explicit how matrix structure enters one
sufficient Muon stepsize.  Corollary~\ref{cor:muon_convergence} should be interpreted as a
theoretical coupled hyperparameter schedule for Muon.  The learning-rate rule
\[
\gamma\asymp
\frac{1}{S_0^{\sfrac{1}{2}}R_0^{\sfrac{1}{4}}T^{\sfrac{3}{4}}}
\]
is obtained together with the momentum schedule
\[
1-\beta_1\asymp \frac{R_0^{\sfrac{1}{2}}}{S_0\sqrt{T}},
\qquad
1-\beta_2=(1-\beta_1)^a,\quad a>\frac{1}{2}.
\]
Thus, the depth-rank dependence of $\gamma$ is not a stand-alone law under an
arbitrary fixed momentum coefficient.  It is the result of balancing the three
leading terms in the upper envelope
\[
\frac{F(X_1)-F^*}{\gamma T}
+2\sqrt{1-\beta_1}\varsigma_{\max}S_0
+\frac{2\gamma\varsigma_{\max}^2\hat{\mathcal{L}}R_0}{1-\beta_1}.
\]
In particular, if one fixes $\delta:=1-\beta_1$ as in conventional training
(\emph{e.g.}, $\delta=0.1$ or $0.05$), then the same envelope becomes
\[
\frac{F(X_1)-F^*}{\gamma T}
+2\sqrt{\delta}\varsigma_{\max}S_0
+\frac{2\gamma\varsigma_{\max}^2\hat{\mathcal{L}}R_0}{\delta}.
\]
Optimizing only over $\gamma$ yields the different scale
$\gamma\asymp \sqrt{\delta/(R_0T)}$, and the stochastic term
$2\sqrt{\delta}\varsigma_{\max}S_0$ does not vanish with $T$.
Therefore, the rate and the learning-rate schedule in
Corollary~\ref{cor:muon_convergence} rely on the scheduled first-momentum gap
$1-\beta_1=\Theta(R_0^{1/2}/(S_0T^{1/2}))$.  The rank bound $R_0$ directly enters the
learning-rate scale because Muon's orthogonalized matrix direction satisfies
$\|O_t^{(l)}\|_{\mathsf{F}}\lesssim \sqrt{r^{(l)}}$, so the smoothness cost
accumulates as $\sum_l r^{(l)}$.  This rank-sensitive term is specific to the
matrix geometry of Muon and is not present in the same form for coordinate-wise
Adam or Euclidean SGD.  By contrast, the leading schedule for $1-\beta_1$
depends on the layer-wise noise aggregate $S_0$, the accumulated
rank $R_0$, and the horizon $T$; remaining problem-dependent effects enter
through constants such as $\hat{\mathcal{L}}$ and the initial optimality gap.

These rank factors belong to the nuclear-norm bound in
Corollary~\ref{cor:muon_convergence}. The finite-step Frobenius bound in
Corollary~\ref{cor:muon_dimension_independent} uses different spectral
estimates and needs no explicit rank factor in its schedule.

This theoretical schedule should not be confused with empirical neural scaling
laws of the Kaplan/Chinchilla type
\citep{Kaplan2020ScalingLaws,Hoffmann2022Chinchilla}.  Our analysis
characterizes how Muon's hyperparameters should depend on depth, accumulated
rank bound, and training horizon to guarantee stationarity; it does not by
itself imply a power-law relation between test loss, dataset size, model size,
and compute.
\end{remark}

\begin{remark}[Role of the Nesterov momentum correction]
\label{rem:nesterov_projection}
The Nesterov momentum correction plays a different role from the coupled
learning-rate and first-momentum schedules above.  When $\beta_2=1$,
$N_t^{(l)}$ reduces to the standard momentum $M_t^{(l)}$.  From
Theorem~\ref{thm:muon_convergence}, the Nesterov coefficient enters the initialization term and the following
noise--drift terms
\[
2\sqrt{2\beta_1(1-\beta_1\beta_2)(1-\beta_2)}
\varsigma_{\max}S_0
+\frac{2\gamma\beta_2\varsigma_{\max}^2\hat{\mathcal{L}}R_0}{1-\beta_1}.
\]
The bound therefore exhibits a trade-off: a smaller $\beta_2$ reduces the
initialization and drift coefficients, while changing the stochastic term.
This does not by itself prove an empirical early- or late-stage advantage.  The extra stochastic contribution and the change in the drift coefficient
relative to $\beta_2=1$ are lower-order under
$\beta_2=1-(1-\beta_1)^a$ with $a>\frac{1}{2}$; the drift term itself
remains leading-order. Thus the Nesterov correction does not change the leading $\mathcal{O}(T^{-\sfrac{1}{4}})$ convergence rate.
\end{remark}

\subsection{Dimension independence}
\label{sec:dimension_independence_discussion}
The bound in Corollary~\ref{cor:muon_dimension_independent} contains no
explicit matrix-dimension or rank factor when the block count and problem
constants are fixed. The constants $\ell_J$ and $u_J$ depend on the finite
polynomial map; dimension independence does not imply that these constants
are small or that the noise and smoothness constants stay fixed as an
architecture changes.

\begin{remark}[What the sharper spectral identities do and do not imply]
\label{rem:df_sharper_geometry}
Equations~\eqref{eq:spectral_AB}--\eqref{eq:full_shape_alignment} retain
the full finite map and give a useful descent-or-small-gradient test for
an individual block. However, $\|O\|_{\mathrm{op}}\le c_J$ cannot replace
$\|O\|_{\mathsf F}\le u_J$ in a Frobenius-smoothness bound without an
additional curvature condition. Similarly, an expected tracking-error
bound at each deterministic time does not imply the same bound at the
first time that alignment becomes unreliable. We therefore prove the
averaged stochastic result directly, using the same three-term
tracking decomposition, rather than relying on a stopping-time
conversion. Appendix~\ref{app:angular_scope} gives the precise geometric
certificate and explains this distinction. The present theorem establishes
dimension independence; it does not claim a uniformly small NS constant.
\end{remark}

\subsection{Implementation scope}
\label{sec:implementation_scope}

\begin{remark}[Generality beyond exact Newton--Schulz orthogonalization]
\label{rem:general_inexact_orthogonalization}
Our analysis does not require $O_t^{(l)}$ to be an exact orthogonal or polar
factor.  Exact orthogonalization corresponds to the special case
$\varsigma_i=1$ for all nonzero singular directions.  In contrast, the proof of
Theorem~\ref{thm:muon_convergence} only uses the two inequalities in
Proposition~\ref{prop:spectral_structure},
\[
\langle N_t^{(l)},O_t^{(l)}\rangle
\ge \varsigma_{\min}\|N_t^{(l)}\|_*,
\qquad
\|O_t^{(l)}\|_{\mathsf{F}}
\le \varsigma_{\max}\sqrt{r^{(l)}}.
\]
Therefore, exact polar decomposition is not needed.  It is sufficient that the
orthogonalization routine preserves enough directional alignment with
$N_t^{(l)}$ and keeps the update magnitude uniformly bounded.

This viewpoint is related to, but more algorithm-agnostic than, the recent
analysis of Muon with Newton--Schulz orthogonalization
\citep{KimOh2026MuonNewtonSchulz}.  That work studies the practical
Newton--Schulz iteration directly and shows that, for a fixed number of
Newton--Schulz steps, Muon converges at the same rate as the ideal SVD-polar
version up to a constant factor, with the factor converging to one rapidly as
the number of Newton--Schulz steps increases.  Our argument does not require
the modified singular values to be symmetric around $1$, nor does it require
$\varsigma_{\min}$ and $\varsigma_{\max}$ to approach $1$ as the number of
Newton--Schulz iterations grows.  The constants in our bound depend on
$\varsigma_{\min}$ and $\varsigma_{\max}$, and the same rate is preserved as
long as $\varsigma_{\min}>0$ and the distortion ratio
$\varsigma_{\max}/\varsigma_{\min}$ remains controlled.  Consequently, the
proof applies not only to Newton--Schulz approximations, but also to other
inexact orthogonalization or spectral-flattening procedures satisfying the same
alignment and magnitude conditions.
For Frobenius-normalized finite polynomial steps, however, $\phi_J(s)\to0$
as $s\to0$. Hence a uniform $\varsigma_{\min}>0$ need not exist over
arbitrary spectra. Sections~\ref{sec:finite_spectral_map} and
\ref{sec:dimension_independent_convergence} avoid this condition by using
$\phi_J(s)/s$ and proving Frobenius, rather than nuclear-norm, stationarity.
\end{remark}

\begin{remark}[Normalization and implementation scope]
\label{rem:df_normalization}
If the initial normalization is $N/(\kappa\|N\|_{\mathsf F})$ for a fixed
$\kappa\ge1$, apply the results to the effective map
$\widetilde\phi_J(s)=\phi_J(s/\kappa)$ on $[0,1]$. In particular, the
slope at zero is $(\prod_j a_j)/\kappa$, not $\prod_j a_j$.
For $N/(\|N\|_{\mathsf F}+\varepsilon)$ with fixed $\varepsilon>0$,
the same energy bound holds and
\[
\langle N,O\rangle\ge
\ell_J\frac{\|N\|_{\mathsf F}^2}{\|N\|_{\mathsf F}+\varepsilon}
\ge\ell_J(\|N\|_{\mathsf F}-\varepsilon).
\]
Accordingly, add $L\ell_J\varepsilon$ to the right-hand side of
\eqref{eq:df_convergence}; \eqref{eq:df_rate_explicit} acquires the
residual $2L\varepsilon$.

If the output in block $l$ is multiplied by a fixed $\alpha_l>0$, the same
proof applies with
$\ell_J$ replaced by $\alpha_{\min}\ell_J$ and
$u_J$ by $\alpha_{\max}u_J$, where
$\alpha_{\min}=\min_l\alpha_l$ and
$\alpha_{\max}=\max_l\alpha_l$; the operator-norm bound becomes
$\alpha_{\max}c_J$. This follows by applying the blockwise inequalities
with these common bounds. Such scaling preserves dimension independence
only when $\alpha_{\max}$ and $1/\alpha_{\min}$ remain bounded
independently of matrix size. Unbounded size-dependent multipliers cannot
be absorbed into a dimension-independent constant. No finite-precision
error guarantee is asserted here.
\end{remark}


\section{Conclusion}
\label{sec:conclusion}
We established convergence guarantees for layer-wise Muon with finite
Newton--Schulz updates and Nesterov momentum under joint
gradient-dependent smoothness and bounded conditional variance.
A common descent-and-tracking argument separates the momentum error into
initialization, stochastic noise, and gradient drift. Controlling the
finite polynomial through $\phi_J(s)/s$ yields an
$\mathcal O(T^{-1/4})$ bound on the expected average Frobenius gradient norm
under coupled learning-rate and momentum schedules, including the standard
equal-coefficient Nesterov rule. The bound has no explicit matrix-dimension
or rank factors when the block count and problem constants are fixed.
We also verified the scalar-map conditions analytically for the original
five-step quintic iteration.

The central implication is that finite orthogonalization provides sufficient
alignment and update-energy control for convergence without a positive
lower bound on its nonzero output singular values. The complementary
nuclear-norm result identifies where rank dependence enters under stronger
spectral control. Extending the analysis to finite-precision arithmetic and
to fixed momentum with increasing batch sizes would further connect these
guarantees to training practice.

\bibliography{./refs}
\bibliographystyle{apalike}

\clearpage
\appendix

\section{Useful Lemmas}

\begin{lemma}
\label{lemma_smooth}
Under Assumption~\ref{assump:layer_wise_smoothness}, for any $X,Y\in\mathcal S$,
\begin{equation}
F(Y)\le F(X)+\langle\nabla F(X),Y-X\rangle
+\frac{\mathcal L_0+\mathcal L_1\|\nabla F(X)\|_{\mathsf F}^{q}}{2}
\sum_{l=1}^L\|Y^{(l)}-X^{(l)}\|_{\mathsf F}^{2}.
\end{equation}
\end{lemma}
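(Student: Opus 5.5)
We apply the fundamental theorem of calculus along the segment $Z(\tau)=X+\tau(Y-X)$, $\tau\in[0,1]$, and bound the remainder with Assumption~\ref{assump:layer_wise_smoothness}, anchored at $X$. Since $F$ is differentiable on the linear space $\mathcal S$ and $\tau\mapsto F(Z(\tau))$ is differentiable with derivative $\langle\nabla F(Z(\tau)),Y-X\rangle$, we obtain
\[
F(Y)-F(X)-\langle\nabla F(X),Y-X\rangle=\int_0^1\langle\nabla F(Z(\tau))-\nabla F(X),\,Y-X\rangle\,d\tau .
\]
Using the integral here presumes integrability of the derivative. If only differentiability is available, the mean-value theorem applied to $g(\tau)=F(Z(\tau))-\tau\langle\nabla F(X),Y-X\rangle$ works instead. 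Alternatively, one can note that the assumption with $X$ fixed makes $\nabla F$ Lipschitz-bounded relative to $X$ along the segment, hence continuous there, so the integral is legitimate.

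The key step is to invoke the assumption with the pair $(X,Z(\tau))$, so that the gradient-dependent factor is evaluated at the base point $X$ rather than at intermediate points. Together with Cauchy--Schwarz this gives
\[
\|\nabla F(Z(\tau))-\nabla F(X)\|_{\mathsf F}\le\bigl(\mathcal L_0+\mathcal L_1\|\nabla F(X)\|_{\mathsf F}^q\bigr)\,\tau\|Y-X\|_{\mathsf F}.
\]
The assumption is stated with the factor at its first argument, and the difference norm is symmetric, so this choice of orientation is exactly what is needed. For $q=0$ the convention gives the factor $\mathcal L_0+\mathcal L_1$, consistent with the statement.

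Integrating $\tau$ over $[0,1]$ contributes the factor $1/2$. Finally, $\|Y-X\|_{\mathsf F}^2=\sum_l\|Y^{(l)}-X^{(l)}\|_{\mathsf F}^2$ by the definition of the tuple norm, which gives the displayed inequality.

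The main point of care is the orientation of the smoothness constant. Because the factor is fixed at $X$, no Grönwall-type argument or restriction to small steps is needed, and this makes the bound usable jointly across all blocks in the descent step.
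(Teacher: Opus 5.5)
Your proof is correct and follows the paper's argument: the fundamental theorem of calculus along $X+\tau(Y-X)$, Cauchy--Schwarz, the smoothness assumption anchored at the base point $X$, the factor $1/2$ from $\int_0^1\tau\,d\tau$, and the blockwise splitting of $\|Y-X\|_{\mathsf F}^2$. One small caveat concerns your regularity aside: applying the mean-value theorem directly to $g$ gives only $g(1)-g(0)\le K_X\tau^*\|Y-X\|_{\mathsf F}^2$, where $K_X=\mathcal L_0+\mathcal L_1\|\nabla F(X)\|_{\mathsf F}^q$ and $\tau^*\in(0,1)$ is uncontrolled, so it loses the $1/2$; apply it instead to the nonincreasing function $g(\tau)-\tfrac12K_X\tau^2\|Y-X\|_{\mathsf F}^2$ (this is moot anyway, since the assumption at every base point makes $\nabla F$ continuous and your integral route is valid).
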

\begin{proof}
Writing $V=Y-X$ and applying the fundamental theorem of calculus gives
\begin{equation}
\begin{aligned}
F(Y)-F(X)-\langle\nabla F(X),V\rangle
&=\int_0^1\langle\nabla F(X+tV)-\nabla F(X),V\rangle\,dt\\
&\le\int_0^1\|\nabla F(X+tV)-\nabla F(X)\|_{\mathsf F}\|V\|_{\mathsf F}\,dt\\
&\le\frac{\mathcal L_0+\mathcal L_1\|\nabla F(X)\|_{\mathsf F}^{q}}{2}\|V\|_{\mathsf F}^2.
\end{aligned}
\end{equation}
The last identity $\|V\|_{\mathsf F}^2=\sum_l\|V^{(l)}\|_{\mathsf F}^2$ proves the claim.
\end{proof}

\begin{lemma}
Let $\{N_t^{(l)}\}_{t=1}^T$ be the generalized Nesterov momentum sequence generated by Algorithm~\ref{alg:muon_layerwise}. Then, for $t\ge2$, $N_t^{(l)}$ admits the following equivalent recursive form (with $N_1^{(l)}=(1-\beta_1\beta_2)G_1^{(l)}$):
\begin{equation}
N_t^{(l)} =\beta_1 N_{t-1}^{(l)} + (1-\beta_1 \beta_2) G_{t}^{(l)} - \beta_1 (1-\beta_2) G_{t-1}^{(l)} = \beta_1 N_{t-1}^{(l)} + (1-\beta_1) G_t^{(l)} + \beta_1(1-\beta_2)(G_t^{(l)} - G_{t-1}^{(l)}).
\end{equation}
\end{lemma}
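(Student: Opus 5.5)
The proof is a direct algebraic manipulation of the two update lines of Algorithm~\ref{alg:muon_layerwise}, done separately for each block $l$; the superscript $(l)$ is dropped below.

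\emph{Base case.} Since $M_0=0$, we have $M_1=(1-\beta_1)G_1$. Substituting into the Nesterov line gives
\[
N_1=\beta_2(1-\beta_1)G_1+(1-\beta_2)G_1=(1-\beta_1\beta_2)G_1 ,
\]
which is the stated initial value.

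\emph{Recursion for $t\ge2$.} The idea is to eliminate the momentum buffer $M$ in favour of $N$ and $G$.

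First, when $\beta_2>0$, invert the Nesterov relation:
\[
M_t=\frac{N_t-(1-\beta_2)G_t}{\beta_2}.
\]
To avoid dividing by $\beta_2$ at all, it is cleaner to start from the right-hand side of the claim and expand it instead. Write
\[
\beta_1N_{t-1}=\beta_1\beta_2M_{t-1}+\beta_1(1-\beta_2)G_{t-1}.
\]
Adding $(1-\beta_1\beta_2)G_t-\beta_1(1-\beta_2)G_{t-1}$ cancels the $G_{t-1}$ terms and leaves
\[
\beta_1\beta_2M_{t-1}+(1-\beta_1\beta_2)G_t .
\]
Now compute $N_t$ directly from the algorithm, using the momentum update:
\[
N_t=\beta_2\bigl(\beta_1M_{t-1}+(1-\beta_1)G_t\bigr)+(1-\beta_2)G_t=\beta_1\beta_2M_{t-1}+(1-\beta_1\beta_2)G_t .
\]
The two expressions agree, which proves the first equality for every $\beta_2\in[0,1]$, including the boundary values $\beta_2=0$ and $\beta_2=1$.

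\emph{Second form.} Split the coefficient of $G_t$ as
\[
1-\beta_1\beta_2=(1-\beta_1)+\beta_1(1-\beta_2).
\]
Grouping the $\beta_1(1-\beta_2)$ pieces with the $-\beta_1(1-\beta_2)G_{t-1}$ term gives
\[
(1-\beta_1)G_t+\beta_1(1-\beta_2)(G_t-G_{t-1}),
\]
which is the second displayed form.

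No step is a genuine obstacle. The only care needed is to avoid the division by $\beta_2$, which the direct-expansion route above does, and to keep the $t=1$ initialization consistent with $M_0=0$.
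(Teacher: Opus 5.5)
Your proof is correct and matches the paper's argument: both substitute the momentum update into the Nesterov line and cancel the $\beta_1\beta_2 M_{t-1}$ term without dividing by $\beta_2$. The paper does this by expanding $N_t-\beta_1N_{t-1}$ directly, while you expand both sides and compare; you also explicitly check $N_1=(1-\beta_1\beta_2)G_1$ from $M_0=0$, which the paper states without writing out.
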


\begin{proof}
By the update rule for $N_t^{(l)}$, we obtain
\begin{equation}
\begin{aligned}
N_t^{(l)} - \beta_1 N_{t-1}^{(l)} = & \beta_2 M_{t}^{(l)} + (1-\beta_2) G_{t}^{(l)} - \beta_1 (\beta_2 M_{t-1}^{(l)} + (1-\beta_2) G_{t-1}^{(l)}) \\ 
=& \beta_2 (\beta_1 M_{t-1}^{(l)} + (1-\beta_1) G_{t}^{(l)}) + (1-\beta_2) G_{t}^{(l)} - \beta_1 \beta_2 M_{t-1}^{(l)} - \beta_1 (1-\beta_2) G_{t-1}^{(l)} \\
=& (1-\beta_1 \beta_2) G_{t}^{(l)} - \beta_1 (1-\beta_2) G_{t-1}^{(l)} \\
=& (1-\beta_1) G_{t}^{(l)} + \beta_1 (1-\beta_2)(G_t^{(l)} - G_{t-1}^{(l)}).
\end{aligned}
\end{equation}
Rearranging terms gives the desired recursion.
\end{proof}

\begin{lemma}
Let $a,b>0$ and $0<\alpha<\beta$. Define
\[
\bar{x}:=\left(a+b^{\sfrac{\alpha}{\beta}}\right)^{\sfrac{1}{\alpha}}.
\]
Then, for every $x\ge \bar{x}$,
\[
\frac{a}{x^\alpha}+\frac{b}{x^\beta}\le 1.
\]
Moreover, if $x_\star$ denotes the smallest positive solution of
$a/x^\alpha+b/x^\beta\le 1$, then
\[
x_\star \le \bar{x} \le 2^{\sfrac{1}{\alpha}}x_\star .
\]
Equivalently, the corresponding threshold in the variable $s=x^\alpha$ is within a factor of $2$ of the optimal threshold.
\label{lemma_4}
\end{lemma}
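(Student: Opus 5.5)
The statement to prove is Lemma~\ref{lemma_4}. Write $h(x)=a x^{-\alpha}+b x^{-\beta}$, which is continuous and strictly decreasing on $(0,\infty)$. It tends to $+\infty$ as $x\downarrow0$ and to $0$ as $x\to\infty$. Hence $\{x>0:h(x)\le1\}=[x_\star,\infty)$, where $x_\star$ is the unique solution of $h(x_\star)=1$. Both claims therefore reduce to comparing $\bar x$ with $x_\star$. The plan is to work throughout in the variable $s=x^\alpha$ and set $\theta=\beta/\alpha>1$, so that
\[
h=\frac{a}{s}+\frac{b}{s^{\theta}} .
\]
In this variable, $\bar s:=\bar x^{\alpha}=a+b^{1/\theta}$.

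For the first claim, monotonicity means it suffices to show $h(\bar x)\le1$. Since $\bar s\ge a$ and $\bar s\ge b^{1/\theta}$, I would argue as follows. Because $\bar s^{\theta}=\bar s\cdot\bar s^{\theta-1}$ and $\bar s^{\theta-1}\ge b^{(\theta-1)/\theta}$, we get
\[
\frac{b}{\bar s^{\theta}}\le\frac{b}{\bar s\, b^{(\theta-1)/\theta}}=\frac{b^{1/\theta}}{\bar s}.
\]
Therefore $h(\bar x)\le \dfrac{a+b^{1/\theta}}{\bar s}=1$. The case $x\ge\bar x$ then follows from monotonicity of $h$, and $x_\star\le\bar x$ comes for free.

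For the factor-two bound, use $h(x_\star)=1$, which forces each term to be at most $1$. From $a/s_\star\le1$ we get $s_\star\ge a$, and from $b/s_\star^{\theta}\le1$ we get $s_\star\ge b^{1/\theta}$. Adding these gives $\bar s=a+b^{1/\theta}\le 2s_\star$. Taking $1/\alpha$ powers yields $\bar x\le 2^{1/\alpha}x_\star$. This is exactly the statement that the threshold in $s=x^\alpha$ is within a factor of $2$ of the optimal threshold.

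The only step needing care is the first inequality, namely splitting $\bar s^{\theta}$ so that the $b$-term collapses to $b^{1/\theta}/\bar s$. This uses $\theta>1$, i.e.\ $\alpha<\beta$, together with $\bar s\ge b^{1/\theta}$. Everything else is direct. The positivity assumptions $a,b>0$ are used to guarantee that $x_\star$ exists and is positive.
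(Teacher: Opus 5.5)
Your proof is correct and matches the paper's argument: the substitution $s=x^\alpha$ with $\theta=\beta/\alpha$, the bound $b/\bar s^{\theta}\le b^{1/\theta}/\bar s$ from $\bar s\ge b^{1/\theta}$, and the observation that any feasible $s$ satisfies $s\ge\max\{a,b^{1/\theta}\}$, which gives $\bar s\le 2s_\star$. You also invoke the strict monotonicity of $h$ explicitly, so the feasible set is $[x_\star,\infty)$; this fills in a step the paper leaves implicit when it passes from feasibility of $\bar x$ to every $x\ge\bar x$.
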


\begin{proof}
Let $s=x^\alpha$ and $\gamma=\beta/\alpha>1$. The desired inequality is equivalent to
\begin{equation}
\frac{a}{s} + \frac{b}{s^{\gamma}} \le 1
\label{S.18}
\end{equation}
with $s>0$. Any feasible $s$ for \eqref{S.18} must satisfy
\begin{equation}
s \ge \max\{a,b^{\sfrac{1}{\gamma}}\},
\label{S.19}
\end{equation}
since each term on the left-hand side of \eqref{S.18} is nonnegative.

Now set $\bar{s}:=a+b^{\sfrac{1}{\gamma}}$. Since $\bar{s}\ge b^{\sfrac{1}{\gamma}}$, we have
$b/\bar{s}^{\gamma}\le b^{\sfrac{1}{\gamma}}/\bar{s}$. Therefore,
\begin{equation}
\frac{a}{\bar{s}}+\frac{b}{\bar{s}^{\gamma}}
\le
\frac{a}{\bar{s}}+\frac{b^{\sfrac{1}{\gamma}}}{\bar{s}}
=1.
\end{equation}
Thus $\bar{s}$ is feasible, and consequently every $x\ge \bar{x}:=\bar{s}^{\sfrac{1}{\alpha}}$
satisfies the claimed inequality.

Let $s_\star$ be the smallest feasible value of $s$ in \eqref{S.18}. By \eqref{S.19},
$s_\star\ge \max\{a,b^{\sfrac{1}{\gamma}}\}$. Hence
\[
\bar{s}=a+b^{\sfrac{1}{\gamma}}
\le 2\max\{a,b^{\sfrac{1}{\gamma}}\}
\le 2s_\star .
\]
Taking the power $1/\alpha$ gives $\bar{x}\le 2^{\sfrac{1}{\alpha}}x_\star$, where
$x_\star=s_\star^{\sfrac{1}{\alpha}}$. The inequality $x_\star\le \bar{x}$ follows from the feasibility of $\bar{x}$.
\end{proof}

\section{Proof of Theorem~\ref{thm:muon_convergence}}
\label{app:rank_sensitive_convergence}
\begin{proof}
\noindent\textbf{Step 1: Descent and telescoping.}
Applying Lemma~\ref{lemma_smooth} with $Y=X_{t+1}$ and $X=X_t$ gives
\begin{equation}
F(X_{t+1}) \le F(X_t) + \langle \nabla F(X_t), X_{t+1} - X_t \rangle + \sum_{l=1}^L \frac{\mathcal{L}_0 + \mathcal{L}_1 \|\nabla F(X_t)\|_{\mathsf{F}}^q}{2} \|X_{t+1}^{(l)} - X_t^{(l)}\|_{\mathsf{F}}^2.
\end{equation}
For compactness in this proof, write
$K_t=\mathcal L_0+\mathcal L_1\|\nabla F(X_t)\|_{\mathsf F}^q$.
Substituting the update rule $X_{t+1}^{(l)}=X_t^{(l)}-\gamma O_t^{(l)}$ into the preceding inequality yields
\begin{equation}
\begin{aligned}
F(X_{t+1})
&\le F(X_t)+\langle\nabla F(X_t),X_{t+1}-X_t\rangle\\
&\quad+\frac{K_t}{2}\sum_{l=1}^L
  \|X_{t+1}^{(l)}-X_t^{(l)}\|_{\mathsf F}^2\\
&=F(X_t)-\gamma\sum_{l=1}^L
  \langle\nabla F_{(l)}(X_t),O_t^{(l)}\rangle\\
&\quad+\frac{\gamma^2K_t}{2}\sum_{l=1}^L\|O_t^{(l)}\|_{\mathsf F}^2\\
&=F(X_t)-\gamma\sum_{l=1}^L\langle N_t^{(l)},O_t^{(l)}\rangle\\
&\quad+\gamma\sum_{l=1}^L
  \langle N_t^{(l)}-\nabla F_{(l)}(X_t),O_t^{(l)}\rangle\\
&\quad+\frac{\gamma^2K_t}{2}\sum_{l=1}^L\|O_t^{(l)}\|_{\mathsf F}^2\\
&\le F(X_t)-\gamma\sum_{l=1}^L\langle N_t^{(l)},O_t^{(l)}\rangle\\
&\quad+\gamma\sum_{l=1}^L
  \|N_t^{(l)}-\nabla F_{(l)}(X_t)\|_{\mathsf F}
  \|O_t^{(l)}\|_{\mathsf F}\\
&\quad+\frac{\gamma^2K_t}{2}\sum_{l=1}^L\|O_t^{(l)}\|_{\mathsf F}^2.
\end{aligned}
\end{equation}

By Proposition~\ref{prop:spectral_structure}, the inexact orthogonalization direction satisfies $\langle N_t^{(l)}, O_t^{(l)} \rangle \ge \varsigma_{\min} \| N_t^{(l)} \|_*$ and $\| O_t^{(l)} \|_{\mathsf{F}} \le  \varsigma_{\max} \sqrt{r^{(l)}}$.
Write $D_t^{(l)}=N_t^{(l)}-\nabla F_{(l)}(X_t)$ and
$H_t=\mathcal L_0+(1-q)\mathcal L_1
  +q\mathcal L_1\|\nabla F(X_t)\|_{\mathsf F}$. Then
\begin{equation}
\begin{aligned}
F(X_{t+1})
&\le F(X_t)-\gamma\varsigma_{\min}\sum_{l=1}^L\|N_t^{(l)}\|_*
 +\gamma\varsigma_{\max}\sum_{l=1}^L\sqrt{r^{(l)}}\|D_t^{(l)}\|_{\mathsf F}\\
&\quad+\frac{\gamma^2\varsigma_{\max}^2K_t}{2}\sum_{l=1}^L r^{(l)}\\
&\overset{(i)}{\le}F(X_t)
 -\gamma\varsigma_{\min}\sum_{l=1}^L\|N_t^{(l)}\|_*
 +\gamma\varsigma_{\max}\sum_{l=1}^L\sqrt{r^{(l)}}\|D_t^{(l)}\|_{\mathsf F}\\
&\quad+\frac{\gamma^2\varsigma_{\max}^2H_t}{2}\sum_{l=1}^L r^{(l)}\\
&\overset{(ii)}{\le}F(X_t)
 -\gamma\varsigma_{\min}\sum_{l=1}^L\|\nabla F_{(l)}(X_t)\|_*\\
&\quad+\gamma\varsigma_{\min}\sum_{l=1}^L\|D_t^{(l)}\|_*
 +\gamma\varsigma_{\max}\sum_{l=1}^L\sqrt{r^{(l)}}\|D_t^{(l)}\|_{\mathsf F}\\
&\quad+\frac{\gamma^2\varsigma_{\max}^2H_t}{2}\sum_{l=1}^L r^{(l)}\\
&\overset{(iii)}{\le}F(X_t)
 -\gamma\varsigma_{\min}\sum_{l=1}^L\|\nabla F_{(l)}(X_t)\|_*\\
&\quad+2\gamma\varsigma_{\max}\sum_{l=1}^L\sqrt{r^{(l)}}\|D_t^{(l)}\|_{\mathsf F}
 +\frac{\gamma^2\varsigma_{\max}^2H_t}{2}\sum_{l=1}^L r^{(l)}.
\end{aligned}
\end{equation}
Here, $(i)$ follows from $a^q \le (1-q)+qa$ for $a\ge 0$ and $q\in[0,1]$; $(ii)$ follows from the reverse triangle inequality $\|A\|_* \ge \|B\|_*-\|B-A\|_*$; and $(iii)$ uses $\|A\|_*\le \sqrt{r}\|A\|_{\mathsf{F}} $where $r$ bounds the rank of the error matrix $A=\nabla_{(l)}F(X_t)-N_t^{(l)}$, as stipulated in Theorem~\ref{thm:muon_convergence}, and $\varsigma_{\min} \le \varsigma_{\max}$.

Taking expectations and summing the resulting descent inequality over $t=1,\ldots,T$, the intermediate terms telescope and give
\begin{equation}
\begin{aligned}
\mathbb E[F(X_{T+1})]
&\le F(X_1)-\gamma\varsigma_{\min}
 \sum_{t=1}^T\sum_{l=1}^L
 \mathbb E\|\nabla F_{(l)}(X_t)\|_*\\
&\quad+2\gamma\varsigma_{\max}
 \sum_{t=1}^T\sum_{l=1}^L\sqrt{r^{(l)}}
 \mathbb E\|D_t^{(l)}\|_{\mathsf F}\\
&\quad+\frac{\gamma^2\varsigma_{\max}^2}{2}
 \sum_{t=1}^T\sum_{l=1}^L r^{(l)}\mathbb E[H_t].
\end{aligned}
\end{equation}

Rearranging the terms and using Assumption~\ref{assump:lower_boundedness}, we obtain

\begin{equation}
\begin{aligned}
&\frac{\varsigma_{\min}}{T}\sum_{t=1}^{T}\sum_{l=1}^L{\mathbb E}\left[\left\Vert \nabla F_{(l)}(X_t) \right\Vert_*\right] - \frac{\gamma \varsigma_{\max}^2 q}{2T}\sum_{t=1}^{T}\sum_{l=1}^L r^{(l)}\mathcal{L}_1{\mathbb E}\left[\left\Vert\nabla F(X_t)\right\Vert_{\mathsf{F}}\right] \\
\le & \frac{F(X_{1}) - F^*}{\gamma T}
 + \frac{2\varsigma_{\max}}{T}\sum_{t=1}^{T}\sum_{l=1}^L \sqrt{r^{(l)}} {\mathbb E}\left[\left\Vert N_t^{(l)}-\nabla F_{(l)}(X_t)\right\Vert_{\mathsf{F}}\right]
+ \frac{\gamma \varsigma_{\max}^2}{2}\sum_{l=1}^L r^{(l)}\hat{\mathcal{L}},
\end{aligned}
\label{Eq.s_9}
\end{equation}

where $\hat{\mathcal{L}} := \mathcal{L}_0 + (1-q)\mathcal{L}_1$.

\noindent\textbf{Step 2: The Nesterov tracking-error expansion.}
We next control the discrepancy between the extrapolated momentum and the corresponding true gradient block. From the recursion
$N_t^{(l)}=\beta_1 N_{t-1}^{(l)}+(1-\beta_1\beta_2)G_t^{(l)}-\beta_1(1-\beta_2)G_{t-1}^{(l)}$, we have
\begin{equation}
\begin{aligned}
N_t^{(l)}-\nabla F_{(l)}(X_t)  = & \beta_1 \left( N_{t-1}^{(l)} - \nabla F_{(l)}(X_{t-1}) \right) + (1-\beta_1 \beta_2) \left( G_t^{(l)} -  \nabla F_{(l)}(X_t) \right) \\
& - \beta_1(1-\beta_2)  \left( G_{t-1}^{(l)} - \nabla F_{(l)}(X_{t-1}) \right) + \beta_1\beta_2 \left( \nabla F_{(l)}(X_{t-1}) - \nabla F_{(l)}(X_t) \right) \\
\end{aligned}
\end{equation}

Unrolling this recursion, together with the initialization $M_0^{(l)}=0$, gives
\begin{equation}
\label{eq:tracking_expansion}
\begin{aligned}
N_t^{(l)}-\nabla F_{(l)}(X_t) =& -\beta_1^t\beta_2\nabla F_{(l)}(X_{1}) + (1-\beta_1 \beta_2)(G_t^{(l)} - \nabla F_{(l)}(X_t)) \\
&+ \beta_2(1-\beta_1) \sum_{k=1}^{t-1} \beta_1^{t-k}(G_{k}^{(l)} - \nabla F_{(l)}(X_{k})) \\
&+\beta_1\beta_2\sum_{k=2}^t \beta_1^{t-k} (\nabla F_{(l)}(X_{k-1}) - \nabla F_{(l)}(X_k)),
\end{aligned} 
\end{equation}

Taking norms, expectations, and then averaging over $t=1,\ldots,T$, we decompose the error into three terms:
Write $\varepsilon_t^{(l)}=G_t^{(l)}-\nabla F_{(l)}(X_t)$.
\begin{equation}
\label{Eq.s_11}
\frac1T\sum_{t=1}^T\mathbb E\|D_t^{(l)}\|_{\mathsf F}
\le {\cal T}_1+{\cal T}_2+{\cal T}_3,
\end{equation}
where the initialization, noise, and drift terms are, respectively,
\begin{align*}
{\cal T}_1
&=\frac{\beta_2}{T}\sum_{t=1}^T\beta_1^t
  \|\nabla F_{(l)}(X_1)\|_{\mathsf F},\\
{\cal T}_2
&=\frac1T\sum_{t=1}^T\mathbb E\Bigl\|
 (1-\beta_1\beta_2)\varepsilon_t^{(l)}\\
&\hspace{4em}+\beta_2(1-\beta_1)
 \sum_{k=1}^{t-1}\beta_1^{t-k}\varepsilon_k^{(l)}
 \Bigr\|_{\mathsf F},\\
{\cal T}_3
&=\frac1T\sum_{t=1}^T\mathbb E\Bigl\|
 \beta_1\beta_2\sum_{k=2}^t\beta_1^{t-k}\\
&\hspace{4em}\cdot
 \bigl(\nabla F_{(l)}(X_{k-1})-\nabla F_{(l)}(X_k)\bigr)
 \Bigr\|_{\mathsf F}.
\end{align*}

\noindent\textbf{Step 3: Initialization, noise, and gradient drift.}
The first term is controlled directly by the geometric series:
\begin{equation}
{\cal T}_1 = \frac{\beta_2}{T}\sum_{t=1}^{T} \beta_1^t\left\Vert \nabla F_{(l)}(X_{1})\right\Vert_{\mathsf{F}} \le \frac{\beta_2\left\Vert \nabla F_{(l)}(X_{1})\right\Vert_{\mathsf{F}}}{T(1-\beta_1)}.
\end{equation}

For the stochastic-gradient noise term ${\cal T}_2$, Jensen's inequality and the martingale-difference property of the stochastic gradients yield
\begin{equation}
\begin{aligned}
{\cal T}_2
&=\frac1T\sum_{t=1}^T\mathbb E\Bigl\|
 (1-\beta_1\beta_2)\varepsilon_t^{(l)}
 +(1-\beta_1)\beta_2\sum_{k=1}^{t-1}\beta_1^{t-k}\varepsilon_k^{(l)}
 \Bigr\|_{\mathsf F}\\
&\overset{(i)}{\le}\frac1T\sum_{t=1}^T
 \sqrt{\mathbb E\Bigl\|
 (1-\beta_1\beta_2)\varepsilon_t^{(l)}
 +(1-\beta_1)\beta_2\sum_{k=1}^{t-1}\beta_1^{t-k}\varepsilon_k^{(l)}
 \Bigr\|_{\mathsf F}^2}\\
&\overset{(ii)}{=}\frac1T\sum_{t=1}^T
 \Biggl((1-\beta_1\beta_2)^2\mathbb E\|\varepsilon_t^{(l)}\|_{\mathsf F}^2\\
&\hspace{6em}+(1-\beta_1)^2\beta_2^2
 \sum_{k=1}^{t-1}\beta_1^{2(t-k)}
 \mathbb E\|\varepsilon_k^{(l)}\|_{\mathsf F}^2\Biggr)^{1/2}\\
&\overset{(iii)}{\le}\frac1T\sum_{t=1}^T
 \sqrt{(1-\beta_1\beta_2)^2
 +(1-\beta_1)^2\beta_2^2\sum_{k=1}^{t-1}\beta_1^{2(t-k)}}\,\sigma_{(l)}\\
&\le\sqrt{(1-\beta_1\beta_2)^2
 +\frac{(1-\beta_1)^2\beta_1^2\beta_2^2}{1-\beta_1^2}}\,\sigma_{(l)}\\
&=\sqrt{\frac{1-\beta_1
 +2\beta_1(1-\beta_1\beta_2)(1-\beta_2)}{1+\beta_1}}\,\sigma_{(l)}\\
&\overset{(iv)}{\le}
 \left(\sqrt{1-\beta_1}
 +\sqrt{2\beta_1(1-\beta_1\beta_2)(1-\beta_2)}\right)\sigma_{(l)}.
\end{aligned}
\end{equation}

Here, $(i)$ is Jensen's inequality in the form $\mathbb{E}[Z]\le \sqrt{\mathbb{E}[Z^2]}$; $(ii)$ uses conditional unbiasedness $\mathbb{E}[G_k^{(l)}-\nabla F_{(l)}(X_k)\mid\mathcal F_{k-1}]=0$ from Assumption~\ref{assump:layer_wise_variance}, which eliminates the cross terms; $(iii)$ applies the variance bound in Assumption~\ref{assump:layer_wise_variance}; and $(iv)$ follows from $\sqrt{a+b}\le \sqrt{a}+\sqrt{b}$.

It remains to bound the drift term ${\cal T}_3$, which is caused by the change of the true gradient along the trajectory. Write ${\cal T}_{3,l}$ for this term in layer $l$. Cauchy--Schwarz across layers and the triangle inequality give
\begin{equation}
\begin{aligned}
\sum_{l=1}^L\sqrt{r^{(l)}}\,{\cal T}_{3,l}
&\le\frac{\beta_1\beta_2\sqrt{R_0}}{T}
\sum_{t=1}^T\sum_{k=2}^t\beta_1^{t-k}
\mathbb E\|\nabla F(X_{k-1})-\nabla F(X_k)\|_{\mathsf F}\\
&\le\frac{\gamma\beta_1\beta_2\varsigma_{\max}R_0}{T}
\sum_{t=1}^T\sum_{k=2}^t\beta_1^{t-k}
\mathbb E\bigl[\mathcal L_0+\mathcal L_1\|\nabla F(X_k)\|_{\mathsf F}^q\bigr]\\
&\le\frac{\gamma\beta_2\varsigma_{\max}R_0}{1-\beta_1}
\left(\hat{\mathcal L}+
\frac{q\mathcal L_1}{T}\sum_{k=1}^T\mathbb E\|\nabla F(X_k)\|_{\mathsf F}\right).
\end{aligned}
\label{Eq.s_14}
\end{equation}
The second inequality applies Assumption~\ref{assump:layer_wise_smoothness} with base point $X_k$ and uses
$X_k-X_{k-1}=-\gamma O_{k-1}$ and $\|O_{k-1}\|_{\mathsf F}\le\varsigma_{\max}\sqrt{R_0}$.
The last inequality uses $z^q\le(1-q)+qz$, exchanges the finite sums, and bounds the geometric series by $(1-\beta_1)^{-1}$. In particular, no one-layer bound is applied to a displacement in all layers.

Combining the bounds for ${\cal T}_1$, ${\cal T}_2$, and ${\cal T}_3$ in Eq.~\eqref{Eq.s_11}, we obtain
\begin{equation}
\begin{aligned}
\frac1T\sum_{t=1}^T\sum_{l=1}^L\sqrt{r^{(l)}}\,
\mathbb E\|N_t^{(l)}-\nabla_{(l)}F(X_t)\|_{\mathsf F}
&\le\frac{\beta_2G_0}{T(1-\beta_1)}
+\left(\sqrt{1-\beta_1}+\sqrt{2\beta_1(1-\beta_1\beta_2)(1-\beta_2)}\right)S_0\\
&\quad+\frac{\gamma\beta_2\varsigma_{\max}R_0\hat{\mathcal L}}{1-\beta_1}
+\frac{\gamma\beta_2\varsigma_{\max}R_0q\mathcal L_1}{1-\beta_1}
\frac1T\sum_{t=1}^T\mathbb E\|\nabla F(X_t)\|_{\mathsf F}.
\end{aligned}
\label{Eq.s_15}
\end{equation}

\noindent\textbf{Step 4: Substitution and absorption.}
Finally, substituting Eq.~\eqref{Eq.s_15} into Eq.~\eqref{Eq.s_9} and using $\|\nabla F(X_t)\|_{\mathsf F}\le\sum_l\|\nabla_{(l)}F(X_t)\|_*$ and $\beta_2\le1$ gives
\begin{equation}
\begin{aligned}
&\left(\varsigma_{\min} - \frac{\gamma \varsigma_{\max}^2q\mathcal{L}_1R_0}{2} - \frac{2\gamma \varsigma_{\max}^2q\mathcal{L}_1R_0}{1-\beta_1} \right)\frac{1}{T}\sum_{t=1}^{T}\sum_{l=1}^L{\mathbb E}\left[\left\Vert \nabla F_{(l)}(X_t) \right\Vert_*\right] \\
\le & \frac{F(X_{1}) - F^*}{\gamma T}
 + \frac{2 \beta_2\varsigma_{\max}}{T(1-\beta_1)}\sum_{l=1}^L \sqrt{r^{(l)}} \left\Vert \nabla F_{(l)}(X_{1})\right\Vert_{\mathsf{F}}\\
  &+ {2\left({\sqrt{1-\beta_1}} + {\sqrt{2\beta_1(1-\beta_1\beta_2)(1-\beta_2)}} \right)\varsigma_{\max}}\sum_{l=1}^L\sqrt{{r^{(l)}}}\sigma_{(l)} \\
&+ \frac{2\gamma\beta_2\varsigma_{\max}^2\hat{\mathcal{L}}\sum_{l=1}^L {r^{(l)}}}{1-\beta_1}
+ \frac{\gamma \varsigma_{\max}^2\hat{\mathcal{L}}\sum_{l=1}^L r^{(l)}}{2},
\end{aligned}
\label{Eq.s_17}
\end{equation}
where $R_0=\sum_{l=1}^L r^{(l)}$. This is precisely the claimed convergence bound. The use of $R_0$ in the absorption coefficient accounts for simultaneous, coupled block updates.
\end{proof}

\section{Proof of Corollary~\ref{cor:muon_convergence}}

\begin{proof}
  Applying $\beta_2 = 1-(1-\beta_1)^a$ for $a > \frac{1}{2}$, we have
\begin{equation}
  \sqrt{\beta_1(1-\beta_1\beta_2)(1-\beta_2)}
  = \sqrt{\beta_1(1-\beta_1 + \beta_1(1-\beta_1)^a)(1-\beta_1)^a}
  =\sqrt{\beta_1(1-\beta_1)^{1+a}+\beta_1^2(1-\beta_1)^{2a}}.
\end{equation}

We need the term
$\sqrt{\beta_1(1-\beta_1)^{1+a}+\beta_1^2(1-\beta_1)^{2a}}$
to be of no larger order than $\sqrt{1-\beta_1}$ so that the Nesterov
momentum projection in Muon does not affect the final convergence rate. This
is equivalent to
$\beta_1(1-\beta_1)^{1+a}+\beta_1^2(1-\beta_1)^{2a}=O(1-\beta_1)$, which
holds for $a>\frac{1}{2}$.

Since $\beta_2\le 1$, we minimize the following upper envelope of the three
leading terms in Eq.~\eqref{Eq.s_17}. By Young's inequality,
\begin{equation}
\begin{aligned}
& \frac{F(X_1)-F^*}{\gamma T}
+ 2\sqrt{1-\beta_1}\varsigma_{\max}S_0
+ \frac{2\gamma\varsigma_{\max}^2\hat{\mathcal{L}}R_0}{1-\beta_1}  \\
\ge& \left(\frac{4 (F(X_1)-F^*)}{\gamma T}\right)^{\sfrac{1}{4}}\cdot
\left(4\sqrt{1-\beta_1}\varsigma_{\max}S_0\right)^{\sfrac{1}{2}}\cdot
\left( \frac{8\gamma\varsigma_{\max}^2 \hat{\mathcal{L}}R_0}{1-\beta_1}\right)^{\sfrac{1}{4}} \\
= &  \frac{512^{\sfrac{1}{4}}\varsigma_{\max}\hat{\mathcal{L}}^{\sfrac{1}{4}}(F(X_1)-F^*)^{\sfrac{1}{4}}S_0^{\sfrac{1}{2}}R_0^{\sfrac{1}{4}}}{T^{\sfrac{1}{4}}}.
\end{aligned}
\end{equation}
The equality condition is
\begin{equation}
\frac{4(F(X_1)-F^*)}{\gamma T}
= 4\sqrt{1-\beta_1}\varsigma_{\max}S_0
= \frac{8\gamma\varsigma_{\max}^2\hat{\mathcal{L}}R_0}{1-\beta_1}.
\end{equation}
Solving these two balancing equations gives
\begin{equation}
\gamma =
\frac{(F(X_1)-F^*)^{\sfrac{3}{4}}}
{2^{\sfrac{1}{4}}\varsigma_{\max}S_0^{\sfrac{1}{2}}\hat{\mathcal{L}}^{\sfrac{1}{4}}R_0^{\sfrac{1}{4}}T^{\sfrac{3}{4}}},
\quad \beta_1 =
1-\frac{(2\hat{\mathcal{L}}(F(X_1)-F^*)R_0)^{\sfrac{1}{2}}}{S_0T^{\sfrac{1}{2}}}.
\end{equation}
At this equality point, the minimized upper envelope is
\begin{equation}
\frac{512^{\sfrac{1}{4}}\varsigma_{\max}S_0^{\sfrac{1}{2}}R_0^{\sfrac{1}{4}}
\hat{\mathcal{L}}^{\sfrac{1}{4}}(F(X_1)-F^*)^{\sfrac{1}{4}}
}{T^{\sfrac{1}{4}}}.
\end{equation}

Define $C_1,C_2$ as in Corollary~\ref{cor:muon_convergence}. Choosing the horizon in \eqref{eq:original_horizon}, Lemma~\ref{lemma_4} (or a direct check if $q\mathcal L_1=0$) gives
\begin{equation}
\frac{\gamma \varsigma_{\max}^2q\mathcal{L}_1R_0}{2}
+ \frac{2\gamma \varsigma_{\max}^2q\mathcal{L}_1R_0}{1-\beta_1}
\le \frac{\varsigma_{\min}}{2}.
\end{equation}

Then, we reformulate Eq. (\ref{eq:muon_convergence}) as

\begin{equation}
\begin{aligned}
\frac{1}{T}\sum_{t=1}^{T}\sum_{l=1}^L
\mathbb E [\Vert\nabla F_{(l)}(X_t)\Vert_*]
\le& \frac{2}{\varsigma_{\min}} \Bigg(
\frac{512^{\sfrac{1}{4}}\varsigma_{\max}S_0^{\sfrac{1}{2}}R_0^{\sfrac{1}{4}}
\hat{\mathcal{L}}^{\sfrac{1}{4}}(F(X_1)-F^*)^{\sfrac{1}{4}}
}{T^{\sfrac{1}{4}}} \\
&\quad
+ \frac{2\varsigma_{\max}G_0}{C_2T^{\sfrac{1}{2}}} \\
&\quad
+ 2\sqrt{2}\varsigma_{\max}S_0
\left(\frac{ C_2^{\sfrac{(1+a)}{2}}}{T^{\sfrac{(1+a)}{4}}}
+\frac{ C_2^{a}}{T^{\sfrac{a}{2}}}\right) \\
&\quad
+ \frac{C_1\varsigma_{\max}^2\hat{\mathcal{L}}
R_0}{2T^{\sfrac{3}{4}}}
\Bigg).
\end{aligned}
\end{equation}
\end{proof}

\section{Proofs of the dimension-independent results}
\label{app:dimension_independence}

\subsection{Proof of Proposition~\ref{prop:finite_map_geometry}}
\begin{proof}
Each polynomial step preserves the representation in the original left and
right singular-vector bases and applies $p_j$ to its signed diagonal entries.
The final entries are positive by \eqref{eq:finite_ns_constants}, giving the
displayed SVD formula for $O$.
For $s_i>0$, put $r_i=\phi_J(s_i)/s_i$ and $w_i=s_i^2$, so
$\sum_iw_i=1$ and $r_i\in[\ell_J,u_J]$. Direct calculation gives
\begin{equation}
\label{eq:df_AB_weights}
A=\sum_iw_ir_i,\qquad B=\sum_iw_ir_i^2,
\qquad \langle N,O\rangle=\|N\|_{\mathsf F}A.
\end{equation}
The energy estimates follow from $\ell_J\le r_i\le u_J$ and
$r_i^2\le u_Jr_i$. The operator-norm estimate follows from the largest
output singular value.

For the angular estimate, $(r_i-\ell_J)(u_J-r_i)\ge0$ implies
\[
B\le(u_J+\ell_J)A-u_J\ell_J.
\]
Consequently,
\begin{equation}
\label{eq:df_kantorovich_proof}
\frac{B}{A^2}\le
\frac{u_J+\ell_J}{A}-\frac{u_J\ell_J}{A^2}
\le\frac{(u_J+\ell_J)^2}{4u_J\ell_J}.
\end{equation}
The last expression is the maximum over $A>0$, attained at
$A=2u_J\ell_J/(u_J+\ell_J)$. This is the Kantorovich angle bound,
derived here directly. Finally,
\[
\langle H,O\rangle
=\langle N,O\rangle-\langle E,O\rangle
\ge\|N\|_{\mathsf F}A-\|E\|_{\mathsf F}\sqrt B
\ge A(\|N\|_{\mathsf F}-\bar\rho_J\|E\|_{\mathsf F}).
\]
The estimates needed by the convergence theorem also hold for $N=0$,
where $O=0$.
\end{proof}

\subsection{Analytic constants for the fixed quintic}
\label{app:explicit_ns_constants}
Let $p(s)=as+bs^3+cs^5$, with the exact decimal coefficients in
\eqref{eq:explicit_ns_constants}, and let $h(z)=a+bz+cz^2$.
Completing the square gives
\begin{equation}
\label{eq:quintic_positive_gain}
h(z)\ge a-\frac{b^2}{4c}
=0.638614570514\ldots>0,\qquad z\ge0.
\end{equation}
For completeness, the two positive stationary points of $p$ are
$\sqrt{z_-}$ and $\sqrt{z_+}$, where
\[
z_\pm=\frac{-3b\pm\sqrt{9b^2-20ac}}{10c}.
\]
Evaluating $p$ at these points and at the endpoints proves the interval
inclusions
\begin{equation}
\label{eq:ns_invariant_intervals}
p([0,5/4])\subseteq[0,5/4],\qquad
p([17/25,5/4])\subseteq[17/25,5/4].
\end{equation}
For example, the local maximum is between $1.2023$ and $1.2024$, the local
minimum is between $0.6818$ and $0.6819$,
$p(17/25)=1.1362138043392$, and $p(5/4)=1.17909912109375$.
These loose rational brackets suffice for \eqref{eq:ns_invariant_intervals};
no optimization of a high-degree composite polynomial is needed.
Also, $h$ is decreasing on $[0,(17/25)^2]$ and
$h((17/25)^2)>1$, so $p(s)\ge s$ for $0\le s\le17/25$.

Consider an initial $s\in(0,1]$. While an iterate is below $17/25$, it
cannot decrease. Once it reaches $[17/25,5/4]$, it remains there by
\eqref{eq:ns_invariant_intervals}. If $s\ge17/25$, its later iterates
are at least $17/25\ge(17/25)s$; if $s<17/25$, they are at least $s$.
Thus $\phi_J(s)\ge(17/25)s$ for every $J\ge1$.
All iterates lie in $[0,5/4]$, and on this interval
$h(s^2)\le a$, because $(5/4)^2<-b/c$. It follows that
\[
\frac{\phi_J(s)}s=\prod_{j=0}^{J-1}h(\phi_j(s)^2)\le a^J.
\]
The limit at $s\downarrow0$ is $a^J$, proving $u_J=a^J$.
This proves all of \eqref{eq:explicit_ns_constants}.
The general theorem permits different coefficients at different steps;
for such schedules positivity and reachable intervals must be checked for
that schedule rather than inferred from this fixed-coefficient example.

\subsection{Proof of Theorem~\ref{thm:muon_dimension_independent}}
\begin{proof}
We follow the four steps of the proof of
Theorem~\ref{thm:muon_convergence} in
Appendix~\ref{app:rank_sensitive_convergence}. Only the alignment,
update-energy, and joint-drift bounds change. Write
\[
E_t^{(l)}=N_t^{(l)}-\nabla_{(l)}F(X_t),\qquad
H_t=\sum_{l=1}^L\|\nabla_{(l)}F(X_t)\|_{\mathsf F},\qquad
b_J=\ell_J+u_J,
\]
and set $\bar H=T^{-1}\sum_{t=1}^T\mathbb E H_t$ and
$\bar E=T^{-1}\sum_{t=1}^T\sum_{l=1}^L\mathbb E\|E_t^{(l)}\|_{\mathsf F}$.

\noindent\textbf{Step 1: Descent and telescoping.}
Proposition~\ref{prop:finite_map_geometry} gives
$\langle N_t^{(l)},O_t^{(l)}\rangle\ge\ell_J\|N_t^{(l)}\|_{\mathsf F}$
and $\|O_t^{(l)}\|_{\mathsf F}\le u_J$. Splitting the true gradient into
$N_t^{(l)}-E_t^{(l)}$ and applying the reverse triangle inequality, just as
in the nuclear-norm proof, yields
\begin{equation}
\label{eq:df_descent_alignment}
\begin{aligned}
\langle\nabla_{(l)}F(X_t),O_t^{(l)}\rangle
&\ge\ell_J\|N_t^{(l)}\|_{\mathsf F}-u_J\|E_t^{(l)}\|_{\mathsf F}\\
&\ge\ell_J\|\nabla_{(l)}F(X_t)\|_{\mathsf F}
-b_J\|E_t^{(l)}\|_{\mathsf F}.
\end{aligned}
\end{equation}
These inequalities also hold when $N_t^{(l)}=0$ and $O_t^{(l)}=0$.
Moreover, $\sum_l\|O_t^{(l)}\|_{\mathsf F}^2\le L u_J^2$.
Lemma~\ref{lemma_smooth}, $z^q\le(1-q)+qz$, and
$\|\nabla F(X_t)\|_{\mathsf F}\le H_t$ therefore imply
\[
F(X_{t+1})\le F(X_t)-\gamma\ell_J H_t
+\gamma b_J\sum_l\|E_t^{(l)}\|_{\mathsf F}
+\frac{\gamma^2 L u_J^2}{2}
\bigl(\hat{\mathcal L}+q\mathcal L_1H_t\bigr).
\]
Taking expectations, summing over $t$, and using $F(X_{T+1})\ge F^*$ gives
\begin{equation}
\label{eq:df_descent_intermediate}
\left(\ell_J-\frac{\gamma L u_J^2q\mathcal L_1}{2}\right)\bar H
\le\frac D{\gamma T}+b_J\bar E
+\frac{\gamma L u_J^2\hat{\mathcal L}}{2}.
\end{equation}
This is the counterpart of \eqref{Eq.s_9}, with Frobenius stationarity
and an unweighted sum of block tracking errors.

\noindent\textbf{Step 2: The Nesterov tracking-error expansion.}
Put $\varepsilon_t^{(l)}=G_t^{(l)}-\nabla_{(l)}F(X_t)$.
The momentum recursion and $M_0^{(l)}=0$ give exactly the expansion
\eqref{eq:tracking_expansion}, which we write as
\[
E_t^{(l)}=-\beta_1^t\beta_2\nabla_{(l)}F(X_1)
+Z_t^{(l)}+V_t^{(l)},
\]
where
\[
\begin{aligned}
Z_t^{(l)}&=(1-\beta_1\beta_2)\varepsilon_t^{(l)}
+\beta_2\delta\sum_{k=1}^{t-1}\beta_1^{t-k}\varepsilon_k^{(l)},\\
V_t^{(l)}&=\beta_1\beta_2\sum_{k=2}^t\beta_1^{t-k}
\bigl(\nabla_{(l)}F(X_{k-1})-\nabla_{(l)}F(X_k)\bigr).
\end{aligned}
\]
Thus the same three terms as in \eqref{Eq.s_11} satisfy
\[
\frac1T\sum_{t=1}^T\mathbb E\|E_t^{(l)}\|_{\mathsf F}
\le {\cal T}_{1,l}+{\cal T}_{2,l}+{\cal T}_{3,l},
\]
where ${\cal T}_{1,l}$ is the averaged norm of the initialization term,
${\cal T}_{2,l}=T^{-1}\sum_t\mathbb E\|Z_t^{(l)}\|_{\mathsf F}$,
and ${\cal T}_{3,l}=T^{-1}\sum_t\mathbb E\|V_t^{(l)}\|_{\mathsf F}$.

\noindent\textbf{Step 3: Initialization, noise, and gradient drift.}
The initialization bound uses the same geometric series:
\[
\sum_l{\cal T}_{1,l}
=\frac{\beta_2}{T}\sum_{t=1}^T\beta_1^t G_{\mathsf F}
\le\frac{\beta_2G_{\mathsf F}}{T\delta}.
\]
For the noise term, conditional unbiasedness eliminates temporal cross
terms within each block. The variance bound and geometric series give
\[
\begin{aligned}
\mathbb E\|Z_t^{(l)}\|_{\mathsf F}^2
&\le\left((1-\beta_1\beta_2)^2
+\beta_2^2\delta^2\sum_{k=1}^{t-1}\beta_1^{2(t-k)}\right)\sigma_{(l)}^2\\
&\le\frac{\delta+2\beta_1(1-\beta_1\beta_2)(1-\beta_2)}{1+\beta_1}
\sigma_{(l)}^2.
\end{aligned}
\]
Jensen's inequality and $\sqrt{x+y}\le\sqrt x+\sqrt y$ imply
$\sum_l{\cal T}_{2,l}\le v_\beta S_{\mathsf F}$.
This estimate does not require independence across layers.

For drift, apply Cauchy--Schwarz across blocks before invoking joint
smoothness. In place of the rank-weighted bound in \eqref{Eq.s_14}, use
$\sum_l\|W^{(l)}\|_{\mathsf F}\le\sqrt L\|W\|_{\mathsf F}$ and
$\|X_k-X_{k-1}\|_{\mathsf F}\le\gamma u_J\sqrt L$:
\begin{equation}
\label{eq:df_tracking_drift}
\begin{aligned}
\sum_l {\cal T}_{3,l}
&\le\frac{\beta_1\beta_2\sqrt L}{T}
\sum_{t=1}^T\sum_{k=2}^t\beta_1^{t-k}
\mathbb E\|\nabla F(X_{k-1})-\nabla F(X_k)\|_{\mathsf F}\\
&\le\frac{\gamma\beta_1\beta_2L u_J}{T}
\sum_{t=1}^T\sum_{k=2}^t\beta_1^{t-k}
\mathbb E\bigl[\hat{\mathcal L}+q\mathcal L_1 H_k\bigr]\\
&\le\frac{\gamma\beta_2L u_J}{\delta}
\bigl(\hat{\mathcal L}+q\mathcal L_1\bar H\bigr).
\end{aligned}
\end{equation}
The second inequality uses Assumption~\ref{assump:layer_wise_smoothness}
with base point $X_k$ and $X_k-X_{k-1}=-\gamma O_{k-1}$.
The last exchanges the finite sums and bounds
$\beta_1\sum_{t=k}^T\beta_1^{t-k}\le\delta^{-1}$.
Combining the three terms gives the counterpart of \eqref{Eq.s_15}:
\begin{equation}
\label{eq:df_tracking_bound}
\bar E\le
\frac{\beta_2G_{\mathsf F}}{T\delta}+v_\beta S_{\mathsf F}
+\frac{\gamma\beta_2L u_J\hat{\mathcal L}}{\delta}
+\frac{\gamma\beta_2L u_Jq\mathcal L_1}{\delta}\bar H.
\end{equation}

\noindent\textbf{Step 4: Substitution and absorption.}
Substituting \eqref{eq:df_tracking_bound} into
\eqref{eq:df_descent_intermediate} and moving its last term to the left gives
\[
\begin{aligned}
&\left(\ell_J-\frac{\gamma L u_J^2q\mathcal L_1}{2}
-\frac{\gamma\beta_2L u_Jb_Jq\mathcal L_1}{\delta}\right)\bar H\\
&\qquad\le\frac D{\gamma T}
+\frac{\beta_2b_JG_{\mathsf F}}{T\delta}
+b_Jv_\beta S_{\mathsf F}
+\frac{\gamma\beta_2L u_Jb_J\hat{\mathcal L}}{\delta}
+\frac{\gamma L u_J^2\hat{\mathcal L}}{2}.
\end{aligned}
\]
Using $\beta_2\le1$ in the coefficient on the left and $\bar H\ge0$
yields \eqref{eq:df_convergence}. This completes the same descent,
tracking, and absorption argument as for Theorem~\ref{thm:muon_convergence},
with the finite-map bounds supplying all geometric constants.
\end{proof}

\subsection{Proof of Corollary~\ref{cor:muon_dimension_independent}}
\begin{proof}
As in the proof of Corollary~\ref{cor:muon_convergence}, the schedules
balance the three leading terms of the convergence bound. With
$b_J=\ell_J+u_J$, the corresponding envelope is
\[
\Phi_{\mathsf F}(\gamma,\delta)
=\frac D{\gamma T}+b_JS_{\mathsf F}\sqrt\delta
+\frac{\gamma L u_Jb_J\hat{\mathcal L}}{\delta}.
\]
For $D,S_{\mathsf F}>0$, weighted Young's inequality gives
\[
\Phi_{\mathsf F}(\gamma,\delta)
\ge\left(\frac{64D(b_JS_{\mathsf F})^2L u_Jb_J\hat{\mathcal L}}{T}\right)^{1/4},
\]
with equality when
\[
\frac{4D}{\gamma T}
=2b_JS_{\mathsf F}\sqrt\delta
=\frac{4\gamma L u_Jb_J\hat{\mathcal L}}{\delta}.
\]
These balancing equations give $\gamma\asymp T^{-3/4}$ and
$\delta\asymp T^{-1/2}$. The corollary allows any positive
$\gamma_0,\delta_0$ with these exponents; the verification below also covers
$D=0$ or $S_{\mathsf F}=0$ without using the equality conditions.

Under \eqref{eq:df_schedule}, $\delta=\delta_0T^{-1/2}$ and
\[
1-\beta_1\beta_2=\delta+\beta_1\delta^a,\qquad
v_\beta\le\sqrt\delta+\sqrt2\bigl(\delta^{(1+a)/2}+\delta^a\bigr).
\]
The condition $\Delta^{\mathsf F}_{\gamma,\beta_1}\ge\ell_J/2$ follows from
$A_JT^{-1/4}+B_JT^{-3/4}\le1$.
The sufficient threshold \eqref{eq:df_horizon} follows by the same elementary
argument as Lemma~\ref{lemma_4}; if $A_J=B_J=0$ there is nothing to check.
Substitution into \eqref{eq:df_convergence}, followed by division by
$\ell_J/2$, proves \eqref{eq:df_rate_explicit}. For $a>1/2$, all exponents
except the leading $1/4$ are strictly larger than $1/4$.
Finally $\|\nabla F(X_t)\|_{\mathsf F}\le H_t$, so the Euclidean/Frobenius
stationarity claim follows directly.

To prove \eqref{eq:df_initial_gradient}, put
$g=\|\nabla F(X_1)\|_{\mathsf F}$ and
$L_g=\mathcal L_0+\mathcal L_1g^q>0$. Applying Lemma~\ref{lemma_smooth}
to $Y=X_1-\nabla F(X_1)/L_g$ and using $F(Y)\ge F^*$ gives
$g^2\le2DL_g\le2D(\hat{\mathcal L}+q\mathcal L_1g)$.
Solving this quadratic inequality and using $G_{\mathsf F}\le\sqrt Lg$
proves the claim. This argument concerns the stated optimization assumptions,
not a network-width limit.
\end{proof}

\subsection{Scope of the angular refinement}
\label{app:angular_scope}
For one nonzero block momentum $N$ with true gradient $H$ and error $E=N-H$,
\eqref{eq:full_shape_alignment} gives the valid implication
\[
\|N\|_{\mathsf F}\ge2\bar\rho_J\|E\|_{\mathsf F}
\quad\Longrightarrow\quad
\langle H,O\rangle\ge\tfrac12\ell_J\|N\|_{\mathsf F}.
\]
If the displayed sufficient condition fails, the triangle inequality gives
\[
\|H\|_{\mathsf F}<(2\bar\rho_J+1)\|E\|_{\mathsf F}.
\]
Failure of this test is not equivalent to actual negative alignment; it only
means that this sufficient lower bound is unavailable. Nor does smallness of
one block gradient imply stationarity of the complete parameter tuple.

To make precise the stronger small-constant argument, consider the
\emph{single-block} case and separately assume spectral-direction smoothness
\begin{equation}
\label{eq:optional_spectral_smoothness}
F(X+V)\le F(X)+\langle\nabla F(X),V\rangle
+\frac{\mathcal L_{\mathrm{sp}}}{2}\|V\|_{\mathrm{op}}^2.
\end{equation}
This is an additional hypothesis, not a consequence of
Assumption~\ref{assump:layer_wise_smoothness} with a dimension-independent
constant. Define $h_\gamma=\mathcal L_{\mathrm{sp}}\gamma c_J^2/\ell_J$.
If $\|N_t\|_{\mathsf F}\ge2\bar\rho_J\|E_t\|_{\mathsf F}+h_\gamma$,
\eqref{eq:full_shape_alignment} and
\eqref{eq:optional_spectral_smoothness} yield
\[
F(X_{t+1})\le F(X_t)-\tfrac12\gamma\ell_J\|N_t\|_{\mathsf F}.
\]
On any sample path for which $\max_{t\le T}\|E_t\|_{\mathsf F}\le\bar e$,
either this condition first fails, or it holds at every step. In the first
case the gradient at the failed test is at most
$(2\bar\rho_J+1)\bar e+h_\gamma$; in the second case telescoping gives
$\min_t\|\nabla F(X_t)\|_{\mathsf F}\le2D/(\gamma\ell_JT)+\bar e$.
Therefore the precise deterministic certificate is
\begin{equation}
\label{eq:conditional_angular_certificate}
\min_{t\le T}\|\nabla F(X_t)\|_{\mathsf F}
\le\frac{2D}{\gamma\ell_JT}
+(2\bar\rho_J+1)\bar e
+\frac{\mathcal L_{\mathrm{sp}}\gamma c_J^2}{\ell_J}.
\end{equation}
An averaged or pointwise expected error bound is not a bound on
$\max_t\|E_t\|_{\mathsf F}$, and it cannot simply be substituted for
$\bar e$. Initialization transients must also be included in any such uniform
bound. Equation~\eqref{eq:conditional_angular_certificate} is thus a
conditional geometric certificate, not a second stochastic convergence
claim under the three stated assumptions.

The size of $u_J$ cannot in general be removed from a uniform Frobenius
update bound by sharpening a scalar inequality. Let
$A_0=\phi_J'(0)>0$ and take rank $r$ with all normalized singular values
$s_i=r^{-1/2}$. Then
\[
A(N)=\sqrt r\,\phi_J(r^{-1/2})\longrightarrow A_0,\qquad
B(N)=r\phi_J(r^{-1/2})^2\longrightarrow A_0^2.
\]
Thus any uniform bound $\|O\|_{\mathsf F}\le U$ needs $U\ge A_0$, and
any $B\le K A$ needs $K\ge A_0$. For the fixed quintic, $A_0=u_J$.
The large universal magnitude constant and the absence of dimension factors
are distinct issues.

\end{document}